\newtheorem{theorem}{Theorem}
\newtheorem{lemma}[theorem]{Lemma}
\newtheorem{definition}{Definition}%
\newcommand{\indep}{\perp \!\!\! \perp}
\newcommand{\ours}{\textsc{DCSurvival}}
\newcommand{\chunkai}[1]{{\color{red}  [\text{Chunkai:} #1]}}
\newcommand{\weijia}[1]{[\text{Weijia:}]}
\title{Deep Copula-Based Survival Analysis for Dependent Censoring with Identifiability Guarantees}
\author{
    Weijia Zhang\textsuperscript{\rm 1\thanks{Corresponding author.}}
    Chun Kai Ling\textsuperscript{\rm 2}
    Xuanhui Zhang\textsuperscript{\rm 3}
}
\begin{document}

\maketitle

\begin{abstract}
Censoring is the central problem in survival analysis where either the time-to-event (for instance, death), or the time-to-censoring (such as loss of follow-up) is observed for each sample. The majority of existing machine learning-based survival analysis methods assume that survival is conditionally independent of censoring given a set of covariates; an assumption that cannot be verified since only marginal distributions are available from the data. The existence of dependent censoring, along with the inherent bias in current estimators has been demonstrated in a variety of applications, accentuating the need for a more nuanced approach. However, existing methods that adjust for dependent censoring require practitioners to specify the ground truth copula.
This requirement poses a significant challenge for practical applications, as model misspecification can lead to substantial bias. In this work, we propose a flexible deep learning-based survival analysis method that simultaneously accommodates for dependent censoring and eliminates the requirement for specifying the ground truth copula. 
We discuss the identifiability of our model under a broad family of copulas and survival distributions. Experiment results from a wide range of datasets demonstrate that our approach successfully discerns the underlying dependency structure and significantly reduces survival estimation bias when compared to existing methods.
\end{abstract}

\section{Introduction}
Survival analysis is a branch of statistical methods that focuses on modeling the time it takes for certain events to occur, with seminal work tracing back to mid-twentieth century such as the Kaplan-Meier estimator \cite{Kaplan1958} and Cox partial likelihood \cite{Cox1972}. 
Survival analysis has been widely applied in many disciplines, with applications in healthcare such as epidemiology \cite{Selvin2008}, clinical trials \cite{Emmerson2021}, and personalized medicine \cite{Zhang2017}, as well as equipment failure time analysis \cite{Voronov2018}. 

The most prominent challenge of survival analysis is the existence of \textit{censoring}, which occurs when the event time of a sample is not fully observed. 
Censoring is ubiquitous in clinical trials because a participant has the right to withdraw \cite{Ondrusek1998}. 
For example, in clinical follow-up study designed to evaluate the effect of radiology, the period of relapse-free survival can only be observed if a participant exhibits the expected symptoms during the span of the follow-up. 
However, the true time-to-event time remains unobservable if the participant withdraws prematurely or the study concludes prior to the cancer relapse. 
This inherent uncertainty requires algorithms that account for censoring. Neglecting censored observations can result in loss of efficiency and estimation bias unless the observations are missing completely at random \cite{Leung1997}.

A common assumption underpinning most machine learning and statistical survival analysis stipulates that censoring and survival are \emph{conditionally independent} given the observed covariates \cite{Wang2019}.
This assumption allows censored observations to be utilized by simultaneously maximizing the log-likelihood for both censored and uncensored samples.
Unfortunately, as only marginal distributions of event and censoring are available from data, the independent-censoring assumption cannot be verified from observational data \cite{Tsiatis1975} in the similar sense that the unconfoundedness assumption is unverifiable in causal inference \cite{Rubin1974}. 

In many real world applications, censoring mechanisms are in fact \emph{dependent} \cite{Kaplan1958,Leung1997,Templeton2020}.
For example, participants in clinical trials often prematurely remove themselves from the trial if they find the drug to be ineffective or experience adverse effects \cite{Scharfstein1999}, 
Similarly, in observational epidemiology, 
patients with a more advanced disease might be more likely to miss their follow-up clinic visits \cite{Howe2010}. 
Ignoring this dependence can result in biased survival estimations \cite{Kleinbaum2012}. 
One recent approach to account for the dependencies is to assume the parametric form of the dependencies and learn the correlation parameter \cite{Emura2018,Deresa2021,Uai2023}. 
These methods utilize \textit{copulas}, which are powerful statistical tools which model dependencies between random variables in isolation from their marginals. That is, if the copula between observed and censored times is known, then survival marginals may be unbiasedly estimated even in the presence of censored data. 
These methods face two significant and interrelated challenges. 
Firstly, specifying the parametric family of the underlying copula is inherently difficult, as practitioners may lack experience or prior information to make an informed choice.
Secondly, 
misspecified copulas will exacerbate model bias, leading to incorrect inference and misleading results.
More recent methods by \citeauthor{Deresa2021} alleviate these issues by \textit{learning the association parameters} of the copula. Nonetheless, the problem of misspecifying the copula's parametric form still exists.

In this paper, we tackle dependent censoring by eliminating the requirement for a pre-specified copula. 
We demonstrate that the copula characterizing the dependency structure can be identified under reasonably mild conditions and propose learning them using deep neural networks which are trained end-to-end alongside the marginals.
Specifically:
\begin{itemize}
	\item We propose the Deep Copula Survival ({\ours}) framework\footnote{ https://github.com/WeijiaZhang24/DCSurvival}, a deep copula-based survival analysis method that addresses \textit{dependent censoring} \textit{without requiring users to specify the parametric form of the }ground truth copula.
	\item We discuss the theoretical properties of our framework, demonstrating that under mild assumptions \textit{identifiability} is attainable with common parametric survival marginals and the Archimedean copula family.
	\item We evaluate our method on a variety of datasets, demonstrating that {\ours} successfully learns the underlying copula and \textit{significantly reduces bias} in survival predictions when compared to existing state-of-the-art. 
\end{itemize}

\section{Survival, Censoring and Copula}
We are working with a survival dataset $\mathcal{D}$ with the $i$-th sample denoted by $\mathcal{D}_i=(\bm{x}_i, t_i, \delta_i)$, where $\bm{x}_i \in \mathbb{R}^n$ is the $n$-dimensional covariates;
$t_i \in \mathbb{R}^+$ is the observed time; 
and $\delta_i \in \{ 0, 1\}$ is the event indicator.
We focus on the common scenario of \textit{right censoring}, where $t_i = \min(T_i, U_i)$ with $T_i,U_i\in\mathbb{R}^+$ denoting the latent event and censoring times respectively.
We have $\delta_i=1$ when the event time is observed, while $\delta_i=0$ when only the censored time is observed. 
We omit the subscript $i$ when the context is clear. Under this model, the likelihood of a survival data point $(\bm{x},t,\delta)$ under right-censoring is \cite{emura2018analysis}
\begin{align}
    \mathcal{L}
    =& \Pr(T=t, U > t |\bm{x})^{\delta} \Pr(T > t, U=t |\bm{x})^{1-\delta}
\label{equation:likelihood}
\end{align}
We denote the marginal distributions for event and censoring time by $S_{T|X}(t|\bm{x})=\Pr(T>t|\bm{x}) $ and $S_{U|X}(u|\bm{x})=\Pr(U>u|\bm{x})$, with density functions $f_{T|X}(t|\bm{x})=-\partial S_{T|X}(t|\bm{x}) /\partial t$ and $f_{U|X}(t|\bm{x}) = -\partial S_{U|X}(t|\bm{x}) /\partial t$.

\paragraph{Independent Censoring.} 

Most existing models assume that survival and censoring are independent, i.e., $T_i\indep U_i$; or conditionally independent given the covariates, i.e., $T_i\indep U_i \vert \bm{x}_i$.
The likelihood function in Equation \ref{equation:likelihood} simplifies to
\begin{align}
    \mathcal{L}_{\text{indep}} =& f_{T|X}(t|\bm{x}) S_{U|X}(t|\bm{x})^\delta
    \cdot f_{U|X}(t|\bm{x})S_{T|X}(t|\bm{x})^{1-\delta} \nonumber\\
    \propto &  f_{T|X}(t|\bm{x})^\delta \cdot S_{T|X}(t|\bm{x})^{1-\delta}.
\label{equation:likelihood_indep}
\end{align}
The density $f_{U|X}(t|\bm{x})$ and survival function $S_{U|X}(t|\bm{x})$ for the censoring distribution are often omitted during optimization as they are non-informative to the event densities and survival distributions \cite{Kleinbaum2012}. 
However, in observational studies, the censoring mechanism is not only often unknown to researchers, but also unidentifiable solely based on observational data \cite{Tsiatis1975}. 
This motivates methods to address dependent censoring without relying on specific or pre-determined assumptions about the censoring mechanism. One method is to utilize \textit{copulas}.

\paragraph{Copulas.}  Copulas model dependencies between random variables in isolation from their marginal distributions.
Loosely speaking, $\mathcal{C}(u_1,\cdots,u_d):[0,1]^d \to [0,1]$ is a $d$-dimensional copula if it is a distribution function of a random variable with support $[0, 1]^d$ and uniform marginals. 

Copula has found extensive applications thanks to Sklar's theorem \cite{Sklar1959}, which states that any $d$-dimensional continuous joint distribution can be \textit{uniquely} expressed with $d$ uniform marginals and a copula $\mathcal{C}$. More formally, 
\begin{theorem}
\cite{Sklar1959}. Let F be a distribution function with margins $F_1,\cdots,F_d$. There exists a $d$-dimensional copula $\mathcal{C}$ such that for any $(x_1,\cdots, x_d) \in \mathcal{R}^d$ we have $F(x_1,\cdots,x_d)=\mathcal{C}(F(x_1),\cdots,F(x_d))$. 
Furthermore, if the marginals $F_1,\cdots,F_d$ are continuous, $\mathcal{C}$ is unique.
\label{theorem:sklar}
\end{theorem}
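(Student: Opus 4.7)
The plan is to establish existence via an explicit construction using the quantile (generalized inverse) of each marginal, and then derive uniqueness in the continuous case from the probability integral transform combined with the Lipschitz continuity of copulas.

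For existence, the natural candidate is $\mathcal{C}(u_1,\ldots,u_d) := F(F_1^{-1}(u_1),\ldots,F_d^{-1}(u_d))$, where $F_j^{-1}(u) := \inf\{x \in \mathbb{R} : F_j(x) \geq u\}$ denotes the quantile function. The first subtask is to check that this is a bona fide copula: one verifies that $\mathcal{C}$ is grounded and $d$-increasing because $F$ itself is a multivariate distribution function, and that its one-dimensional marginals are uniform using the standard order relations $F_j(F_j^{-1}(u)) \geq u$ and $F_j^{-1}(F_j(x)) \leq x$. The second subtask is to confirm the Sklar identity $F(x_1,\ldots,x_d) = \mathcal{C}(F_1(x_1),\ldots,F_d(x_d))$ by direct substitution, using that for an $F$-distributed random vector the event $\{X_j \leq F_j^{-1}(F_j(x_j))\}$ coincides almost surely with $\{X_j \leq x_j\}$.

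For uniqueness when every $F_j$ is continuous, I would invoke the probability integral transform: if $X \sim F$ then each $U_j := F_j(X_j)$ is uniformly distributed on $[0,1]$, so the joint distribution function of $(U_1,\ldots,U_d)$ is a copula that satisfies the Sklar identity. Any two candidate copulas $\mathcal{C}, \mathcal{C}'$ satisfying the identity must then coincide on the product of ranges $\prod_j F_j(\mathbb{R})$. Continuity of $F_j$ makes each $F_j(\mathbb{R})$ dense in $[0,1]$, and any $d$-copula is $1$-Lipschitz in each coordinate with respect to the $\ell_\infty$ norm; agreement on a dense subset therefore propagates to equality on all of $[0,1]^d$.

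The main obstacle I anticipate is the subtle handling of jumps: when marginals have atoms, $F_j(X_j)$ is no longer uniform and uniqueness genuinely fails, because the Sklar identity only pins down $\mathcal{C}$ on $\prod_j F_j(\mathbb{R})$, which omits open intervals. The standard remedy for existence in that more general setting is the \emph{distributional transform} $U_j := F_j(X_j-) + V_j\bigl(F_j(X_j) - F_j(X_j-)\bigr)$ with $V_j$ an independent uniform on $[0,1]$, which restores uniformity of the marginals; this refinement is unnecessary for the statement as written, whose uniqueness claim is restricted to continuous marginals and is fully covered by the density-plus-Lipschitz argument above.
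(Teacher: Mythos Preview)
The paper does not prove this theorem; it is stated with attribution to \cite{Sklar1959} and used as a classical black box, so there is no ``paper's own proof'' to compare against.

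Your sketch is the standard modern route and is essentially correct for the continuous-marginal case, but there is one slip in the general existence argument. The candidate $\mathcal{C}(u_1,\ldots,u_d) = F\bigl(F_1^{-1}(u_1),\ldots,F_d^{-1}(u_d)\bigr)$ is \emph{not} a copula when some $F_j$ has atoms: for $u$ in a gap of $\operatorname{ran} F_j$ one gets $F_j(F_j^{-1}(u)) > u$, so the $j$-th marginal of your $\mathcal{C}$ fails to be uniform. You correctly identify the distributional transform as the fix, but then write that ``this refinement is unnecessary for the statement as written.'' That is not quite right: the theorem asserts existence for \emph{arbitrary} $F$, so you do need either the distributional transform (R\"uschendorf) or the alternative route of defining a subcopula on $\prod_j \overline{\operatorname{ran} F_j}$ and extending it to $[0,1]^d$ by multilinear interpolation. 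Only the \emph{uniqueness} clause is restricted to continuous marginals. With that correction, your plan is complete.
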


In practice, \textit{Archimedean} copulas such as Clayton, Frank,  Gumbel, and Joe copulas are common. 
Archimedean copulas are defined based on $1$-dimensional generator $\varphi$, where
\begin{equation}
	\mathcal{C}(u_1, \cdots, u_d) = \varphi (\varphi^{-1}(u_1) + \cdots + \varphi^{-1}(u_d)).
\label{equation-generator-phi}
\end{equation}
Here $\varphi:[0,\infty] \to [0,1]$ is \emph{$d$-monotone}, i.e., $(-1)^k\varphi^{(k)}(u) \geq 0$ for $k\leq d $ and $u\geq 0$. We say that $\varphi$ is  \emph{completely monotone} if $(-1)^k\varphi^{(k)}(u)\geq 0$ for all $k\geq 0$.

\paragraph{Dependent Censoring via Copula.} 
Dependent censoring arises when unobserved confounders affect both survival and censoring times, leading to dependencies that must be accounted for when evaluating joint likelihoods in Equation \ref{equation:likelihood}. 
This is similar to confounding in causal inference (Figure~\ref{figure:illustration_maintext}): in dependent censoring, we never simultaneously observe the censoring and survival times for a subject; while in causal inference, we never observe the factual and counterfactual outcomes at the same time \cite{Pearl2009}.
\begin{figure}[t]
  \centering
  \begin{subfigure}[b]{0.45\linewidth}
    \includegraphics[width=\textwidth]{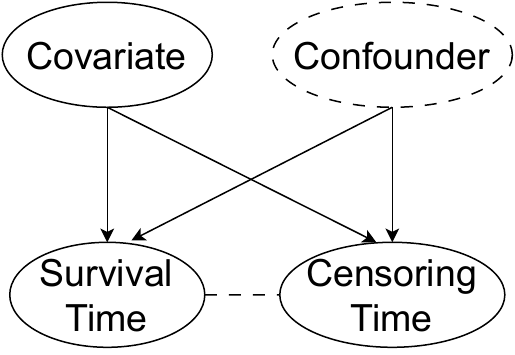}
    \caption{Dependent censoring.}
  \end{subfigure}
  \hfill 
  \begin{subfigure}[b]{0.45\linewidth}
    \includegraphics[width=\textwidth]{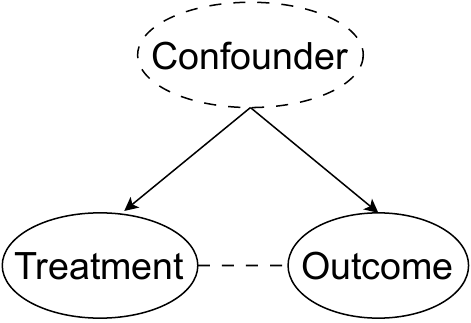}
    \caption{Unobserved confounding.}
  \end{subfigure}
  \caption{Illustrations of (a) dependent censoring in survival analysis and (b) unobserved confounding in causal inference. Solid/dashed nodes denote observed/hidden variables, respectively. The dashed lines between survival/censoring time and treatment/outcome indicate that estimation and evaluation in survival analysis with dependent censoring face similar challenges as in causal inference.}
  \label{figure:illustration_maintext}
\end{figure}

When survival and censoring times are dependent, applying Sklar's theorem yields the more general expression
\begin{align}
    \Pr(T > t, U > u \vert \bm{x}) = \mathcal{C}(S_{T|X}(t|\bm{x}), S_{U|X}(u|\bm{x})),
\label{equation:copula}
\end{align}
which when combined with Equation~\ref{equation:likelihood} yields the likelihood 
\begin{align}
    \mathcal{L}_{\text{dep}}=& \{ -\frac{\partial}{\partial u_1} \Pr(T>u_1, U>t |X=\bm{x} )\vert_{u_1=t}\}^{\delta} \nonumber\\ 
    & \cdot \{ -\frac{\partial}{\partial u_2} \Pr(T>t, U>u_2 |X=\bm{x}) \vert_{u_2=t} \}^{1-\delta}
    \nonumber \\
    = & \left\{ f_{T|X}(t|\bm{x}) \left. \frac{\partial}{\partial u_1} \mathcal{C}(u_1,u_2) \right\vert_{\substack{ u_1=S_{T|X}(t|\bm{x}) \\ u_2 = S_{U|X}(t|\bm{x}) }}\right\}^{\delta} \nonumber\\
    & \cdot \left\{ f_{U|X}(t|\bm{x}) \left. \frac{\partial}{\partial u_2} \mathcal{C}(u_1,u_2) \right\vert_{\substack{ u_1=S_{T|X}(t|\bm{x}) \\ u_2 = S_{U|X}(t|\bm{x}) }}\right\}^{1-\delta}.
\label{equation:likelihood_dep_main}
\end{align}
In this paper, we make the mild assumption that $\mathcal{C}$ does not depend on $\bm{x}$. We see that $\mathcal{L}_{\text{dep}}$ and $\mathcal{L}_{\text{indep}}$ are equivalent only when $\mathcal{C}$ is the independence copula $\mathcal{C}(u_1,u_2) = u_1 u_2$, which corresponds to the case where $T$ and $U$ are conditionally independent. 
For all other copulas, $\mathcal{L}_{\text{indep}}$ is biased because the dependency between $T$ and $U$ and the censoring marginals are not ignorable. 

To the best of our knowledge, all existing copula-based survival models require practitioners to \textit{specify the family of ground truth copula}. 
As most widely-used bivariate copula have closed-form partial derivatives, $\mathcal{L}_{\text{dep}}$ can then be optimized given the parameterization of the survival distributions and the assumed copula.
However, correctly defining the true copula presents an inherent challenge, placing practitioners in a difficult position as any misspecification can substantially amplify bias.

\section{End-to-end Survival Analysis via Copula}
We now introduce \ours, a framework seeking to learn both copula and survival distributions directly from right-censored data. In particular, we jointly maximize the likelihood in Equation~\ref{equation:likelihood_dep_main}, fitting parameters of marginal distributions of $T|X, U|X$ \textit{and} the parameters of the copula $\mathcal{C}$ between them. \ours \ optimizes over copula belonging to the Archimedean family; crucially, this includes the bivariate independence copula $C(u_1,u_2)=u_1 u_2$ and hence subsumes the independent-censoring assumption. 

\ours \ comprises two components, each based on the terms in Equation~\ref{equation:likelihood_dep_main}. The first is the Archimedean copula $\mathcal{C}$ between $T$ and $U$,  
the second comprises the density functions for event and censoring times $f_{T|X}$ and $f_{U|X}$. 
At a high level, suppose $\mathcal{C}(u_1, u_2), f_{T|X}$, and $f_{U|X}$ are parameterized by $\bm{\alpha}= (\theta, \theta_U, \theta_T)$ respectively, and that $\mathcal{C}$ is specified such that the partial derivatives $\partial \mathcal{C}(u_1, u_2)/ \partial u_1$ and $\partial \mathcal{C}(u_1, u_2)/ \partial u_2$ alongside their derivatives with respect to $\theta$ may be computed for all $u_1, u_2$. 
Then, one can compute $\mathcal{L}_{\text{dep}}$, and by applying the chain and product rules obtain the required gradients $\partial \mathcal{L}_{\text{dep}}/ \partial \bm{\alpha}$ needed to optimize $\bm{\alpha}$ in an end-to-end fashion via gradient descent.

Using this framework, one can restrict $U|X$, $T|X$ and $\mathcal{C}$ to be ``textbook'' distributions with few parameters to be estimated. This however leads to problems of model misspecification. 
\ours \ allows for the use of neural networks to model both copulas and margins, as long as they are ``fully differentiable'' in the manner described above.
We describe both components and how they relate to our problem of dependent censoring. 
For brevity, we defer specific architectural details to supplemental material.

\paragraph{Archimedean Copula for Event and Censoring Times.}
\citet{Kimberling1974} showed that the generator $\varphi$ being completely monotone is a necessary and sufficient condition for all \textit{two-dimensional} Archimedean copulas. 
Furthermore, utilizing the Bernstein-Widder characterization theorem \cite{Widder2010}, we know that every completely monotone function can be characterized using a mixture of negative exponential functions. Formally, we have:
\begin{theorem}
	(Bernstein-Widder) A generator $\varphi$ is completely monotone if and only if $\varphi$ is the Laplace transform of a positive random variable $M$, i.e., $\varphi(u)=\mathbb{E}_M [exp(-uM)]$ and $P(M>0)=1$.
\label{theorem-bernstein}
\end{theorem}
Combining Theorem \ref{theorem-bernstein} with the results of \citet{Kimberling1974} implies that any completely monotone generator $\varphi$ can be approximated using a \textit{finite} sum of negative exponentials \cite{Koyama2023}. 
This includes all bivariate Archimedean copulas thanks to the necessary condition.

As such, we learn the generator $\varphi$ (which implicitly defines the copula via Equation~\ref{equation-generator-phi} for $d=2$) via the method by \citet{Ling2020}, which uses neural networks to parameterize $\varphi$ with a large but finite mixture of negative exponentials. 
Roughly speaking, the neural network $\varphi_{nn}: \mathbb{R}^+ \rightarrow \mathbb{R}^+$ resembles a multilayer fully-connected network with a $1$-dimensional input, where each neuron's output is a convex combination of negative exponentials (of the input the network) with different rates. These are mixed with other neurons in the same layer (with some bias included) based on the network weights to give the outputs of the neurons of the next layer. As \citet{Ling2020} show, this rich architecture allows $\varphi_{\text{nn}}$ to satisfy Theorem~\ref{theorem-bernstein} and be a convex combination of an exponential (in the size of the network) number of negative exponentials.

Parameterizing an Archimedean copula via its generator has many benefits, chief of which is that 
we are able to compute $\frac{\partial}{\partial u_1} \mathcal{C}(u_1,u_2)$ and  $\frac{\partial}{\partial u_2} \mathcal{C}(u_1,u_2)$ evaluated at $u_1=S_{T|X}(t|\bm{x})$ and $u_2 = S_{U|X}(t|\bm{x})$ such that their derivatives with respect to the network parameters $\theta$ may in turn be used for gradient descent. In practice, this process is simplified via the use of automatic differentiation tools like Pytorch \cite{paszke2017automatic}. We note that computing these quantities will involve evaluating Equation~\ref{equation-generator-phi}, which necessitates estimating the inverse of $\varphi_{nn}$. This can be done efficiently via Newton's method since $\varphi_{nn}$ is one dimensional, and we refer to reader to \citet{Ling2020} for details.

\textit{Remark.} The network architecture of $\varphi_{nn}$ is not restricted to bivariate Archimedean copula and can be applied to survival analysis with competing risks. However, a key difference is that the completely monotone condition in Theorem \ref{theorem-bernstein} is not necessary for the generators for Archimedean copulas.
However, all \emph{extendible} Archimedean copulas can still be expressed with completely monotone generators when $d>2$ \cite{McNeil2009}. 
In this work, we will focus on dependent censoring in single-risk and leave research on competing risks for future endeavours.

\paragraph{Survival and Censoring Marginals. }
When it is reasonable to assume the parametric form of the survival and censoring distributions, the corresponding survival function $S_{T|X}(t|\bm{x})$ and the event density function $f_{T|X}(t|\bm{x})$ can be directly plugged into Equation \ref{equation:likelihood_dep_main} for evaluation. 
For example, CoxPH model with Weibull marginals \cite{Uai2023} or log-normal marginals can be used. In this case, identifiability is guaranteed: we discuss this in Section~\ref{sec:model-identifiability}.

When the parametric form for survival marginals is unknown, 
one can represent $S_{T|X}$ and $S_{U|X}$ using the monotonic neural density estimators (NDE) proposed by \citet{Chilinski2020}. The NDE network consists of a fully-connected covariate network for learning the representation of $X$, alongside a monotonic network parameterized with non-negative weights to ensure decreasingness with respect to $t$. 
Specifically, the output of the covariate network is concatenated with $t$ and fed to the layers with non-negative weights. 
The final layer of the monotonic network outputs a scalar with sigmoid activation yielding the survival function. 
Correspondingly, the density function may be expressed by $f_{T|X}(t|\bm{x})=-\partial S_{(T|X)}(t|\bm{x})/\partial t$ and $f_{U|X}(t|\bm{x})=-\partial S_{U|X}(t|\bm{x})/\partial t$, each computed via auto differentiation.

Once both marginals and copula are instantiated, their parameters are estimated by maximizing Equation \ref{equation:likelihood_dep_main} via stochastic gradient descent. 
Note that aside from survival distributions, \ours \ also estimates $\mathcal{C}$ as an added bonus. This may be of independent interest to practitioners.

\section{Model Identifiability}
\label{sec:model-identifiability}
Unidentifiable models hold no guarantee that the true parameters are recovered even with an infinite number of datapoints.
\citet{Tsiatis1975} showed that the joint distribution of the survival time $T$ and censoring time $C$ is not identifiable under a completely non-parametric setting. 
Therefore, we discuss identifiability under the mild assumption that the dependency can be characterized by the Archimedean family of copulas.
Furthermore, our approach also has the benefit of not making strict assumptions regarding the survival marginals. 
To our knowledge, existing previous work \cite{Uai2023} is not identifiable and requires practitioners to specify the exact form of the ground truth copula. 

Specifically, we consider identifiability for the following model with parameters $\boldsymbol{\alpha} = (\theta,\theta_T, \theta_U)$. 
In this section, we make the dependence on parameters more explicit (while omiting covariates $\bm{x}$ for brevity), yielding the following.
\begin{align}
&\Pr(T>t, U>u ) = \mathcal{C}_\theta (S_{T,\theta_T}(t), S_{U,\theta_U}(u)), \nonumber\\
&f_{y,\delta=1, \boldsymbol{\alpha}} (y) =  f_{T,\theta_T}(y) \left. \frac{\partial}{\partial u_1} \mathcal{C}_\theta(u_1,u_2) \right\vert_{\substack{ u_1=S_{T,\theta_T}(t) \\ u_2 = S_{U,\theta_U}(t) }} \nonumber\\
&f_{y,\delta=0,\boldsymbol{\alpha}} (y) = f_{U,\theta_U}(y) \left. \frac{\partial}{\partial u_2} \mathcal{C}_\theta(u_1,u_2)  \right \vert_{\substack{ u_1=S_{T,\theta_T}(t) \\ u_2 = S_{U,\theta_U}(t) }},
\label{equation-model}
\end{align}
where $f_{y,\delta=1, \boldsymbol{\alpha}} (y)$ and $f_{y,\delta=0, \boldsymbol{\alpha}} (y)$ are the densities of event and censored observations.

We start with the sufficient condition for identifying the model in Equation \ref{equation-model}. Specifically, we will show that the parameters $\boldsymbol{\alpha}=(\theta, \theta_U, \theta_T )$ can be uniquely determined from the observed dataset $\mathcal{D} = \{(\bm{x}_i,t_i,\delta_i)\}$. In other words, if $f_{y,\delta,\boldsymbol{\alpha}_1} (y)= f_{y,\delta,\boldsymbol{\alpha}_2} (y)$ for all $y$, then $\boldsymbol{\alpha}_1 = \boldsymbol{\alpha}_2$. 
\begin{theorem} \cite{Czado2022}
	Suppose the following conditions are satisfied:
(C1) For all $\theta_{T_1},\theta_{T_2} \in \Theta_T$ and $\theta_{U_1},\theta_{U_2} \in \Theta_U$, we have 
 \begin{align}
		&\lim\limits_{t\to 0} \frac{f_{T,\theta_{T_1}}(t)} {f_{T,\theta_{T_2}}(t)} = 1 \Longleftrightarrow \theta_{T_1} = \theta_{T_2}, \text{and} \nonumber\\
		&\lim\limits_{t\to 0} \frac{f_{U,\theta_{U_1}}(t)} {f_{U,\theta_{U_2}}(t)} = 1 \Longleftrightarrow \theta_{U_1} = \theta_{U_2}. 
       \label{equation:c1}
	\end{align}
	(C2) For alll $(\theta, \theta_T,\theta_U)\in\Theta\times\Theta_T\times\Theta_U$, we have
	\begin{align}
  &\lim_{t\to 0} \left. \frac{\partial}{\partial u_1} \mathcal{C}_{\theta}(u_1,u_2) \right\vert_{\substack{ u_1=S_{T,\theta_T}(t) \\ u_2 = S_{U,\theta_U}(t) }} = 1 \nonumber\\
  &\lim_{t\to 0} \left. \frac{\partial}{\partial u_2} \mathcal{C}_{\theta}(u_1,u_2) \right\vert_{\substack{ u_1=S_{T,\theta_T}(t) \\ u_2 = S_{U,\theta_U}(t) }} = 1
	\end{align}
	Then, the model defined in Equation \ref{equation-model} is identified.
	\label{theorem-identifiability}
\end{theorem}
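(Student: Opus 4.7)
The plan is to show that if two parameter tuples $\boldsymbol{\alpha}_j = (\theta^j, \theta_T^j, \theta_U^j)$, $j\in\{1,2\}$, yield identical sub-densities $f_{y,\delta,\boldsymbol{\alpha}_j}(y)$ for all $y>0$ and $\delta\in\{0,1\}$, then $\boldsymbol{\alpha}_1 = \boldsymbol{\alpha}_2$. The strategy is to first decouple the marginals from the copula by taking a boundary limit at $y\to 0$, and then to pin down the copula parameter from the remaining identities once the marginals have been identified.

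For the first stage, from $f_{y,\delta=1,\boldsymbol{\alpha}_1}(y) = f_{y,\delta=1,\boldsymbol{\alpha}_2}(y)$ I would rearrange into
\[
\frac{f_{T,\theta_T^1}(y)}{f_{T,\theta_T^2}(y)} \;=\; \frac{\partial_{u_1}\mathcal{C}_{\theta^2}(u_1,u_2)\big|_{(S_{T,\theta_T^2}(y),\,S_{U,\theta_U^2}(y))}}{\partial_{u_1}\mathcal{C}_{\theta^1}(u_1,u_2)\big|_{(S_{T,\theta_T^1}(y),\,S_{U,\theta_U^1}(y))}}.
\]
Sending $y\to 0$, condition (C2) forces both factors on the right to $1$, hence the ratio on the left has limit $1$; condition (C1) then gives $\theta_T^1 = \theta_T^2 =: \theta_T^\star$. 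The analogous argument applied to $f_{y,\delta=0,\boldsymbol{\alpha}}$ delivers $\theta_U^1 = \theta_U^2 =: \theta_U^\star$.

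For the second stage, the identified marginals reduce the remaining equalities to $\partial_{u_1}\mathcal{C}_{\theta^1} = \partial_{u_1}\mathcal{C}_{\theta^2}$ and $\partial_{u_2}\mathcal{C}_{\theta^1} = \partial_{u_2}\mathcal{C}_{\theta^2}$ evaluated along the curve $\Gamma = \{(S_{T,\theta_T^\star}(y),\, S_{U,\theta_U^\star}(y)) : y>0\}$. Since the two survival functions are continuous and strictly decreasing from $1$ to $0$, $\Gamma$ is a continuous one-dimensional curve in $[0,1]^2$ connecting $(1,1)$ to $(0,0)$. I would then integrate these partial-derivative identities along $\Gamma$ using the anchor $\mathcal{C}_\theta(1,1)=1$ to recover $\mathcal{C}_{\theta^1} = \mathcal{C}_{\theta^2}$ on $\Gamma$, and use the Archimedean structure to conclude $\theta^1=\theta^2$.

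The main obstacle is this last Archimedean step: equality of the copulas on a one-dimensional curve must force equality of the scalar parameter. The reason to expect this is the representation $\mathcal{C}_\theta(u_1,u_2) = \varphi_\theta(\varphi_\theta^{-1}(u_1) + \varphi_\theta^{-1}(u_2))$, which reduces the bivariate copula to the univariate generator $\varphi_\theta$; evaluating $\mathcal{C}_\theta$ along any curve on which $(u_1,u_2)$ sweeps non-trivially still constrains $\varphi_\theta$ on a continuum of arguments. I would therefore rewrite the curve-wise equality as an equation involving $\varphi_{\theta^1}$ and $\varphi_{\theta^2}$ alone, and invoke injectivity of the parameterization $\theta\mapsto\varphi_\theta$ within the chosen Archimedean sub-family to finish.
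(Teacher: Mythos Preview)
Your first stage matches the paper's argument exactly: form the ratio of sub-densities, send $y\to 0$, use (C2) to kill the copula partial-derivative factors, then apply (C1) to pin down $\theta_T$ and $\theta_U$.

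Your second stage reaches the right intermediate target but by a longer road. The paper observes directly that the survival function of the \emph{observed} time $Y=\min(T,U)$ satisfies
\[
1-F_Y(y)=\Pr(T>y,\,U>y)=\mathcal{C}_\theta\bigl(S_{T,\theta_T}(y),\,S_{U,\theta_U}(y)\bigr),
\]
and since $F_Y$ is determined by the two sub-densities (their sum integrates to $F_Y$), equality of sub-densities gives $\mathcal{C}_{\theta^1}=\mathcal{C}_{\theta^2}$ on the curve $\Gamma$ immediately, without your integration-of-gradients step. Your route via line-integrating the matched partial derivatives from the anchor $(1,1)$ is valid and yields the same conclusion, but the one-line $F_Y$ identity is cleaner and worth knowing.

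On the final step you flagged as the main obstacle---deducing $\theta^1=\theta^2$ from equality on a one-dimensional curve---the paper simply invokes the uniqueness clause of Sklar's theorem. Strictly speaking Sklar gives uniqueness of the copula from the full joint law, not from its values on a curve, so the paper's appeal is somewhat elliptical; what is really being used is injectivity of the map $\theta\mapsto\mathcal{C}_\theta$ restricted to $\Gamma$ within the chosen parametric family. Your plan to unpack this via the Archimedean generator representation is the more careful way to close the gap, and is a reasonable addition rather than a divergence from the paper's intent.
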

\begin{proof}[Proof Sketch]
The proof consists of two main steps.
First, identification for the marginal distributions of the event and censoring times can be shown by tying the densities of observed event/censoring times to the true density of event and censoring distributions utilizing Condition (C2).
In a sense, this bears similarity to showing that the observed outcomes equal to the potential outcomes by expectation in causal inference. 
Second, the copula parameters can be identified by leveraging the unique existence of copulas.
\end{proof}

\begin{theorem} \cite{Czado2022}
Condition (C1) is satisfied by the families of Weibull, log-normal, log-logistic, and log-Student densities.
\label{theorem:marginals}
\end{theorem}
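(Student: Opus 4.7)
The plan is to verify condition (C1) family-by-family via an asymptotic analysis of $f_{T,\theta_T}(t)$ as $t\to 0^+$. The reverse direction of the biconditional is trivial, so the real content is: if $f_{T,\theta_{T_1}}(t)/f_{T,\theta_{T_2}}(t)\to 1$, then $\theta_{T_1}=\theta_{T_2}$. In each case I would (i) derive a leading-order expansion, (ii) show that the limit being finite and nonzero forces one parameter to match, and then (iii) iterate on sub-leading terms to pin down the remaining parameters. By symmetry the argument for $f_U$ is identical, so I would carry out only the $f_T$ calculation.

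For the Weibull and log-logistic families the argument is essentially the same. Both have power-law behavior near zero: writing $f_{\text{Weib}}(t;k,\lambda)=(k/\lambda)(t/\lambda)^{k-1}\exp(-(t/\lambda)^k)$, the exponential factor tends to $1$, so $f_{\text{Weib}}(t;k,\lambda)\sim (k/\lambda^{k})\,t^{k-1}$; likewise $f_{\text{LL}}(t;\alpha,\beta)\sim (\beta/\alpha^{\beta})\,t^{\beta-1}$ since the denominator $(1+(t/\alpha)^{\beta})^{2}\to 1$. The ratio is then of the form $C\cdot t^{k_1-k_2}$ (resp.\ $C\cdot t^{\beta_1-\beta_2}$). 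Convergence to a finite nonzero limit forces equality of the shape parameter, and equality to $1$ then pins down the scale.

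For the log-normal family I would use the substitution $y=\ln t\to -\infty$. The logarithm of the density ratio becomes a quadratic polynomial in $y$ with leading coefficient $\tfrac{1}{2\sigma_2^{2}}-\tfrac{1}{2\sigma_1^{2}}$. For the ratio to have a finite limit, that quadratic coefficient must vanish, giving $\sigma_1=\sigma_2$; the linear coefficient in $y$ must then vanish, giving $\mu_1=\mu_2$; and the leftover constant (together with the prefactor $\sigma_2/\sigma_1$) is automatically $0$ once variances match, so the limit is $1$ exactly as required.

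The log-Student case is where I expect the main difficulty, because the Student-$t$ kernel has polynomial rather than Gaussian tails and so multiple orders of the expansion survive. Again setting $y=\ln t$, the ratio reduces to comparing $(1+((y-\mu_i)/\sigma_i)^{2}/\nu_i)^{-(\nu_i+1)/2}$ against a $1/t$ factor and the Student normalizing constants $K_{\nu_i}$. The leading factor $|y|^{\nu_2-\nu_1}$ forces $\nu_1=\nu_2$; the next factor in the amplitude isolates $\sigma_1=\sigma_2$; and the location $\mu$ must be extracted from sub-leading corrections of order $1/y$ arising from expanding $(y-\mu)^{2}=y^{2}(1-2\mu/y+\mu^{2}/y^{2})$ inside the $(\nu+1)/2$-th power. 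Making this last step rigorous — essentially showing that the sub-leading $O(1/y)$ correction is nonzero whenever $\mu_1\neq\mu_2$ — is the delicate point, and here I would appeal to the calculation in \citet{Czado2022} cited in the statement rather than reproduce it from scratch.
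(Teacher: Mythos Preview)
The paper does not prove this theorem; it simply attributes the result to \cite{Czado2022} and moves on, so there is no in-paper proof to compare against. Your family-by-family asymptotic plan for Weibull, log-logistic, and log-normal is the standard calculation and is correct.

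For the log-Student case, however, your plan contains a real gap, and it is not merely a matter of rigor. Condition (C1) is a pure limit statement: you must show that $\lim_{t\to 0} f_{1}(t)/f_{2}(t)=1$ forces $\theta_{1}=\theta_{2}$. With $\nu_{1}=\nu_{2}=\nu$ and $\sigma_{1}=\sigma_{2}=\sigma$ but $\mu_{1}\neq\mu_{2}$, your own expansion (write $y=\ln t\to -\infty$) gives
\[
\frac{f_{1}(t)}{f_{2}(t)}
=\left(\frac{\nu\sigma^{2}+(y-\mu_{2})^{2}}{\nu\sigma^{2}+(y-\mu_{1})^{2}}\right)^{(\nu+1)/2}
=\Bigl(1+\tfrac{2(\mu_{1}-\mu_{2})}{y}+O(y^{-2})\Bigr)^{(\nu+1)/2}\longrightarrow 1.
\]
The $O(1/y)$ sub-leading correction being nonzero is irrelevant to (C1): it vanishes in the limit, so the limit equals $1$ regardless of $\mu_{1},\mu_{2}$. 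Extracting $\mu$ from sub-leading terms would require a \emph{stronger} hypothesis than (C1) --- for instance, matching the rate of approach to $1$, or equality of densities on an interval --- not the bare limit condition as reproduced in the paper. So ``appealing to the calculation in \cite{Czado2022}'' does not close the gap, because the mechanism you have sketched (reading off $\mu$ from the $1/y$ term) cannot work under (C1) as written. You should check the source directly: either the log-Student family there is parameterized without a free location (with $\mu$ absorbed into the covariate regression, which is common in these models), or the identifiability condition they actually use is sharper than the version of (C1) transcribed in this paper.
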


Theorem \ref{theorem-identifiability} and \ref{theorem:marginals} are attributed to \cite{Czado2022}. As it turns out, Archimedean copulas represented by $\varphi_{nn}$ have the attractive property of satisfying the condition (C2). 

\begin{figure*}
    \centering
    \includegraphics[width=\linewidth]{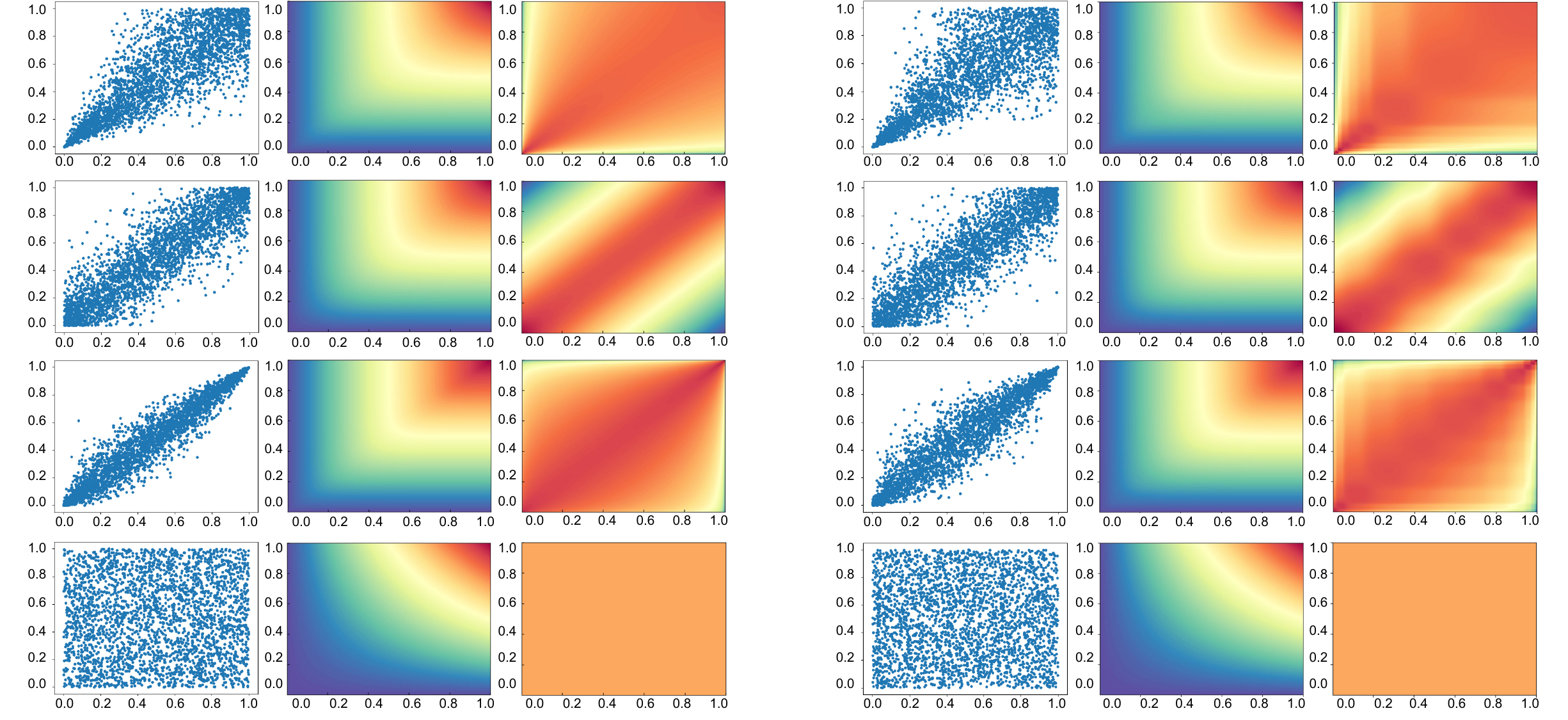}
        \begin{minipage}[t]{0.49\textwidth}
        \centering
        \subcaption{Ground Truth}
        \label{subfig:left}
    \end{minipage}
    \hfill
    \begin{minipage}[t]{0.49\textwidth}
        \centering
        \subcaption{Learned Copula}
        \label{subfig:right}
    \end{minipage}
    \caption{Top to bottom: learning Clayton, Frank, Gumbel, and Independence copulas using {\ours} from the \texttt{Linear-Rksk} dataset. Left to right: ground truth versus learned copula for the (i) scatter sample plots; (ii) joint cumulative distribution plots, (iii) log probability density plots. These figures are best viewed in colour.}
    \label{fig:synthetic_copula}
\end{figure*}

\begin{lemma}
Condition (C2) of Theorem \ref{theorem-identifiability} is satisfied if $\varphi$ is differentiable on $(0,\infty)$ and $\lim_{u\to 0} \varphi' (u) <0$.
\label{lemma-c2}
\end{lemma}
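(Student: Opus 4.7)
The plan is to reduce Condition (C2) to a limit involving only the generator $\varphi$, and then invoke the hypothesis $\lim_{u\to 0^+}\varphi'(u)<0$ to collapse the resulting ratio to $1$. Because the two limits in (C2) are symmetric in $u_1$ and $u_2$, I would only prove the first equality in detail and remark that the second follows by swapping the roles of the two arguments.

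First I would compute the partial derivative of an Archimedean copula in closed form. From $\mathcal{C}_\theta(u_1,u_2)=\varphi\bigl(\varphi^{-1}(u_1)+\varphi^{-1}(u_2)\bigr)$ and the identity $(\varphi^{-1})'(u)=1/\varphi'(\varphi^{-1}(u))$, the chain rule gives
\[
\frac{\partial \mathcal{C}_\theta(u_1,u_2)}{\partial u_1}=\frac{\varphi'\bigl(\varphi^{-1}(u_1)+\varphi^{-1}(u_2)\bigr)}{\varphi'\bigl(\varphi^{-1}(u_1)\bigr)}.
\]
This recasts (C2) as a statement purely about $\varphi$ and its inverse, independent of $\theta_T,\theta_U$.

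Next I would control the inner arguments as $t\to 0$. Since $S_{T,\theta_T}$ and $S_{U,\theta_U}$ are survival functions with $S(0)=1$ (and right-continuous at $0$), as $t\to 0^+$ we have $u_1=S_{T,\theta_T}(t)\to 1$ and $u_2=S_{U,\theta_U}(t)\to 1$. Archimedean generators are normalized so that $\varphi(0)=1$, and the assumption $\lim_{u\to 0^+}\varphi'(u)<0$ (being a strict negative real) implies $\varphi$ is strictly decreasing and continuous near $0$, so $\varphi^{-1}$ is continuous at $1$ with $\varphi^{-1}(1)=0$. Hence both $\varphi^{-1}(u_1)\to 0^+$ and $\varphi^{-1}(u_1)+\varphi^{-1}(u_2)\to 0^+$.

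Combining the two observations, the numerator and denominator in the ratio above both converge to the common nonzero finite value $L:=\lim_{s\to 0^+}\varphi'(s)<0$, so the ratio itself tends to $L/L=1$. Swapping roles of $u_1$ and $u_2$ handles the second limit in (C2), completing the proof. The only subtlety — and the step I would flag as the main obstacle — is justifying that $L$ is a genuine negative real number (not $-\infty$) and thus permits cancellation in the ratio; this is built into the hypothesis $\lim_{u\to 0^+}\varphi'(u)<0$, which I would interpret as a strict negative real limit, and it is also what secures the local strict monotonicity of $\varphi$ needed for $\varphi^{-1}$ to behave as claimed.
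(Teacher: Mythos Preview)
Your proposal is correct and follows essentially the same route as the paper's own proof: compute $\partial\mathcal{C}/\partial u_1$ via the chain rule as a ratio of $\varphi'$ values, send $t\to 0$ so that both arguments of $\varphi^{-1}$ tend to $0$, and use the finite nonzero limit $\lim_{u\to 0^+}\varphi'(u)$ to conclude the ratio is $1$. Your write-up is in fact more careful than the paper's --- you explicitly flag that the hypothesis must be read as a finite strictly negative limit (the appendix version states $\lim_{u\to 0}\varphi'(u)\in(-\infty,0)$), which is exactly what licenses the cancellation $L/L=1$.
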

\begin{theorem}
Archimedean copulas represented by $\varphi_{nn}$ satisfy condition (C2) in Theorem \ref{theorem-identifiability}.
\end{theorem}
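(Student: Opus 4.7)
The plan is to apply Lemma~\ref{lemma-c2}, which reduces the claim to verifying that $\varphi_{nn}$ is differentiable on $(0,\infty)$ and that $\lim_{u\to 0^+}\varphi_{nn}'(u)<0$. Both will follow from an explicit structural description of $\varphi_{nn}$ as a finite convex combination of negative exponentials with strictly positive rates.

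First, I would prove by induction on the layer index $l$ that each hidden unit admits a representation
\[
\varphi^{nn}_{l,i}(u)=\sum_{k} w_k^{(l,i)}\exp\!\bigl(-\mu_k^{(l,i)}\,u\bigr),
\]
with $w_k^{(l,i)}\geq 0$, $\sum_k w_k^{(l,i)}=1$, and $\mu_k^{(l,i)}>0$ for $l\geq 1$. The base case is the constant input $\varphi^{nn}_{0,1}(u)=1$. The inductive step distributes the factor $\exp(-B_{l,i}u)$ across the convex combination coming from layer $l-1$ and merges it with the row-stochastic weights $A_{l,i,\cdot}$, producing new rates $\mu_k^{(l-1,j)}+B_{l,i}$. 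Because $B$ is reparameterized as $\exp(\Phi_B)$, every entry of $B$ is strictly positive, so every rate is positive after the first hidden layer. The output layer is itself a convex combination of units from layer $L$, so the same structure persists: $\varphi_{nn}(u)=\sum_k w_k\exp(-\mu_k u)$ with $w_k\geq 0$, $\sum_k w_k=1$, and $\mu_k>0$ for all $k$.

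Given this representation, differentiability on $(0,\infty)$ is immediate, because $\varphi_{nn}$ is a finite linear combination of functions smooth on all of $\mathbb{R}$. Term-by-term differentiation then yields
\[
\varphi'_{nn}(u)=-\sum_k w_k\,\mu_k\exp(-\mu_k u),
\]
so that $\lim_{u\to 0^+}\varphi'_{nn}(u)=-\sum_k w_k\mu_k$. Because $(w_k)$ is a probability vector (so at least one $w_k$ is positive) and every $\mu_k>0$, this limit is strictly negative, and Lemma~\ref{lemma-c2} yields condition (C2).

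The main thing to be careful about, rather than a genuine obstacle, is bookkeeping: confirming that the softmax reparameterization of each row of $A$ and the $\exp$ reparameterization of $B$ truly preserve the convex-combination structure and the strict positivity of rates through every layer. The conclusion does depend on the strict (not merely non-negative) positivity of $B$ supplied by the implementation; without it, the limit could degenerate to zero and the hypothesis of Lemma~\ref{lemma-c2} would fail.
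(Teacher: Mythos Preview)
Your proposal is correct and follows essentially the same approach as the paper: an induction on layers establishing that $\varphi_{nn}$ is a finite convex combination of negative exponentials with strictly positive rates, from which differentiability and $\lim_{u\to 0^+}\varphi_{nn}'(u)<0$ follow, so Lemma~\ref{lemma-c2} applies. Your version is in fact more explicit than the paper's, which simply asserts that ``it follows naturally that $\varphi'(u)<0$'' after the induction; your computation of the limiting derivative and your remark on the role of the $\exp(\Phi_B)$ reparameterization in guaranteeing strictly positive rates are welcome clarifications.
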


Therefore, \ours \ is identifiable when (i) $\mathcal{C}$ is defined using $\varphi_{nn}$ and (ii) $T$ and $U$ have margins belonging to the families in Theorem~\ref{theorem:marginals}. 
However, it is worth noting that Theorem 6 does not necessarily apply to all Archimedean copulas, but rather those that can be represented by $\varphi_{nn}$. In particular, \citet{Czado2022} have shown that not all Archimedean copulas are identifiable.
When using NDE to represent the survival marginals, the sufficient condition in (C1) of Theorem \ref{theorem-identifiability} is also satisfied since two neural networks with the same parameters produce the same outputs. 
However, the necessary condition is \textit{not} guaranteed.
Although the NDE instantiation does not strictly adhere to our identifiability condition, it compensates with superior flexibility,  requiring neither proportional hazard nor survival marginal assumptions.
Empirically, we find that {\ours} performs significantly better than state-of-the-art survival methods when dependent censoring is present, and achieves competitive performance under independent censoring,
further highlighting its robustness and versatility. 

\section{Experiments}
Evaluating dependent censoring methods is challenging due to the absence of a known censoring mechanism in observational datasets, akin to assessing treatment effect estimation in causal inference without ground truth causal effects \cite{Parikh2022}. 
To address this, we employ (i) synthetic datasets with defined dependency structures, (ii) semi-synthetic datasets using real-world covariates for censoring time simulation, and (iii) real-world datasets using metrics that do not need ground truth, as commonly done in causal effect studies \cite{Zhang2021}.


Experiments are conducted with one NVIDIA RTX4090 GPU.
We utilize Pytorch \cite{paszke2017automatic} for implementing all neural networks and automatic differentiation. 
Tensors are computed with double precision (fp64) as the inversion of $\varphi$ mandates numerical precision. 
When using Newton’s method to compute the inverse $\varphi_{nn}^{-1}$, we terminate when the error is less than $\num{1e-12}$. 
For all our experiments we set $\varphi_{nn}$ with $L = 2$ and $H_1 = H_2 = 10$, i.e., the copula representation contains two hidden layers with each of width $10$. 
The network is small but sufficiently expressive for the dependency structure since the generator $\varphi_{nn}$ is only $1$-dimensional. 
We use AdamW \cite{Loshchilov2018} for optimization and use 50\%/30\%/20\% training/validation/test splits. 
We used validation samples for early stopping based on the validation log-likelihood. No further hyperparameter tuning was performed. We provide our code and other details in the supplementary material.

\paragraph{Synthetic Datasets}
Following the approach of \citet{Uai2023}, we generate two synthetic datasets with the CoxPH model where the event and censoring risks are specified by Weibull marginal distributions \cite{bender2005generating}. 
Specifically, we sample two random variables from known copula and then apply inverse transform sampling to generate the event and censoring times $T_i$ and $U_i$ according to their Weibull distribution parameters. The observed $t_i$ is the minimum of the two.
We experiment with four Archimedean copulas, including the Clayton, Frank, Gumbel and Independence copulas and sample using copula-specific methods \cite{Scherer2012}.
We generate two datasets, \texttt{Linear-Risk} and \texttt{Nonlinear-Risk}, which correspond to cases where the the Weibull hazards are linear and non-linear functions of covariates (which are in turn drawn from $\mathcal{U}_{[0,1]}^{10}$).
Further details of the data generating procedure are provided in the supplementary materials.



\paragraph{Semi-Synthetic Datasets}
We use two semi-synthetic datasets based on the \texttt{STEEL} \cite{SathishkumarV2020} which contains 35,040 samples with 9 covariates, and the \texttt{Airfoil} \cite{ThomasBrooks1989} datasets which includes 1503 samples with 6 covariates. We induce censoring following a setting similar to those described in \cite{Uai2023}.
Briefly speaking, we use the dependent variable as the event time, and conditionally sample the censoring time with a copula. 
Contrary to the two synthetic datasets, the proportional hazard assumption is not maintained in the semi-synthetic datasets.
The semi-synthetic dataset generation method is similar to widely-used causal effect estimation benchmarks \cite{Hill2011,dorie2019}.

\paragraph{Real-World Datasets} 
We use two real-world datasets. 
The \texttt{SEER} dataset is from the Surveillance, Epidemiology and End Results database \cite{Howlader2010}.
Following \citet{Czado2022}, we use the monthly survival time of patients with localized pancreas cancer diagnosed between 2000 and 2015, and the event of interest is death caused by pancreas cancer. 
Patients that are alive or have died because of other cancers are considered as censored. There are 15 covariates that measure demographic and pathology features of 11,600 participants. However, as many common risk factors of diseases are not measured, the censoring and event time is likely to be correlated when patients are censored due to death caused by other diseases.

\texttt{GBSG2} is from the German Breast Cancer Study Group \cite{Schumacher1994}, which contains samples obtained from an observational study of 686 women with 8 covariates measuring the pathology characteristics of the participants.
The event of interest is the relapse-free survival time, while the participants are censored when they pass away. 
Both are likely positively correlated due to the correlations between cancer recurrence and patient death.
Further details of all datasets are provided in the supplementary material.

\begin{figure}[!t]
    \centering
    \includegraphics[width=\linewidth]{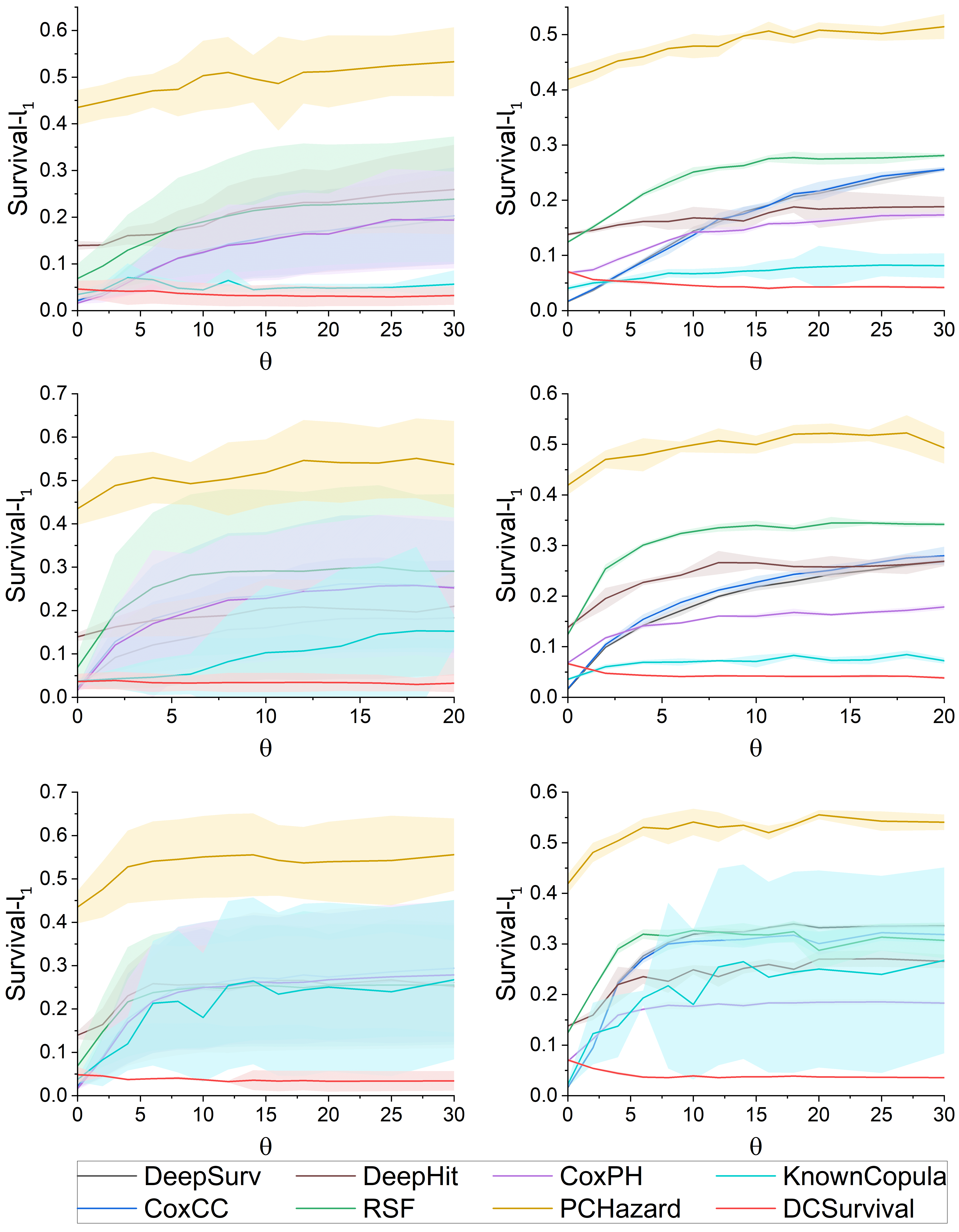} 
    \caption{Top to bottom (row): survival prediction biases of compared algorithms on varying censoring dependency governed by Frank, Clayton, and Gumbel copulas. 
    Left to right: \textsc{Linear-Risk} and \textsc{Nonlinear-Risk} results.
    The lines represent the Survival-$l_1$ means and the shaded areas are the standard deviations. Best viewed in colour.}
    \label{fig:survival_l1}
\end{figure}

\subsection{Identifying Censoring Dependency Structure}
We first empirically validate our identifiability results by instantiating {\ours} with parametric survival marginals that satisfy (C1).
Figure \ref{fig:synthetic_copula} shows the results of {\ours} for learning the copula from the \texttt{Linear-Risk} dataset by contrasting the sample scatter plots, joint cumulative distribution functions, and joint probability density functions of the ground truth copula with those learned by {\ours}. 
From Figure \ref{fig:synthetic_copula}, we can see that {\ours} is able to learn the dependency structure specified by Clayton, Frank, and Gumbel copulas from censored samples, and the contours of the log-likelihood almost exactly match the ground truth.
Additionally, {\ours} also correctly recovers the independent censoring mechanism when the ground truth is specified by the Independence copula. 
These empirical results corroborate the identifiability result in Theorem \ref{theorem-identifiability}, demonstrating that dependency structure can be learned from right-censored data. 
Experiments on \texttt{Nonlinear-Risk} dataset show similar good results which are provided in the supplementary material.


\begin{table}[t]
\centering
\begin{tabular}{ l|c|c}
\hline
  & \texttt{STEEL} & \texttt{Airfoil} \\
\hline
 \textsc{CoxPH} & $0.246\pm 0.008$ & $0.235\pm 0.054$ \\
 \textsc{DeepSurv} & $0.234\pm 0.007$ & $0.265\pm 0.031$ \\
  \textsc{CoxCC} & $0.265\pm 0.025$ & $0.264\pm0.100$\\
    \hline
  \textsc{RSF} & $0.249\pm 0.009$ & $0.474\pm0.034$ \\
  \textsc{DeepHit} & $0.157\pm 0.020$ & $0.259\pm0.055$ \\

 \textsc{PCHazard} & $0.212\pm 0.039$ & $0.200\pm0.038$\\
 \hline
 \textsc{KnownCopula} & $0.105\pm0.013$ & $0.215\pm0.060$ \\
 \hline
 {\ours} & $\bm{0.092\pm0.015}$ & $\bm{0.176\pm0.047}$ \\
\hline
\end{tabular}
\caption{The Survival-$l_1$ (mean$\pm$std) metrics of compared methods on test samples. 
The results reported are the averages obtained from repeating each experiment for 10 times.}
\label{table:semi-synthetic}
\end{table}

\subsection{Reducing Survival Estimation Bias}
We now assess the survival bias mitigation capabilities of {\ours} with the end-to-end NDE instantiation.
Under dependent censoring, frequently used survival metrics such as C-indices \cite{Harrell1982} and Brier scores \cite{Brier1950} are not proper scoring rules, i.e., the highest score is not achieved by the true distribution \cite{Uai2023}. 
Therefore, when the ground truths are known, we compare them with our estimated survival distributions 
using the \emph{Survival-}$l_1$ metric \cite{Uai2023}:
\begin{align}
    \resizebox{.88\linewidth}{!}{$\mathcal{C}_{\text{Survival}-l_1} = \sum\limits^{|\mathcal{D}|}_{i=1} \frac{1}{|\mathcal{D}| t^{max}_i} \int\limits_{0}^{\infty} |S(t|\bm{x}_i) - \hat{S}(t|\bm{x}_i)|dt,$}
\end{align}
where $S$ and $\hat{S}$ are the ground truth and estimated survival distributions, respectively.
For real-world datasets, we utilize \emph{calibration plots} \cite{NiculescuMizil2005} for evaluation, which can be obtained from the event indicators without accessing the true survival marginals.

We evaluate {\ours} against a diverse set of survival analysis techniques, including traditional methods \textsc{CoxPH} \cite{Cox1972} and Random Survival Forest (\textsc{RSF}) \cite{Ishwaran2008}, as well as deep learning-based approaches such as \textsc{DeepSurv} \cite{Katzman2018}, \textsc{DeepHit} \cite{Lee2018}, and \textsc{PCHazard} \cite{Kvamme2021}.
We also compare with a copula-based method that assumes dependent censoring \cite{Uai2023}, henceforth referred to as \textsc{KnownCopula}, as it requires known ground truth copula. 
Notably, \textsc{KnownCopula} also requires users to specify event and censoring marginals.

Results from Figure \ref{fig:survival_l1} and Table \ref{table:semi-synthetic} show that {\ours} consistently achieves the best performances under dependent censoring.
Observe that the performance of \textsc{KnownCopula} deteriorates as the dependency increases. This is possibly caused by the fact that explicit optimizing for $\theta$ involves calculating the exponentials of the inverse of $\theta$, posing substantial numerical challenges.
For independent censoring as shown in Figure \ref{fig:survival_l1}, {\ours} also performs competitively by consistently outperforming \textsc{RSF}, \textsc{DeepHit}, and \textsc{PCHazard}.
Importantly, algorithms surpassing {\ours} under independent censoring are restricted to the proportional hazard assumption which aligns with the data generation procedure.

In Figure \ref{fig:calibration}, calibration plots for the \texttt{SEER} and \texttt{GBSG2} datasets demonstrate that \textsc{DCSurvival} is the closest to the ideal calibration represented by the black dashed line. 
Calibration plot assesses whether an algorithm underestimate/overestimate event risks by comparing the average predicted event risk with the overall event rate \cite{Calster2019}.
Other compared methods systematically underestimate the event probability, as evidenced by their plots residing substantially above the ideal calibration.
For \textsc{KnownCopula}, because the ground truth copula is unknown, we use a convex combination of Frank and Clayton copulas, and simultaneously optimizing for the mixing weight and copula parameters as discussed in \cite{Uai2023}. However, this approach does not yield good performance, possibly due to misspecification of both the copula and survival marginals.
These observations emphasize the value of data-driven methods like  {\ours}, which jointly learns copula and survival marginals.


\section{Related Work}
\paragraph{Independent Censoring Methods.}
Most statistical survival models assume independent censoring.
Notably, the partial likelihood function introduced in the Cox Proportional Hazard (CoxPH) model \cite{Cox1972} explicitly relies on the independent censoring assumption \cite{Jackson2014}.
Furthermore, classical models such as the Kaplan-Meier estimator \cite{Kaplan1958} and Nelson-Aalen estimator \cite{Nelsen1998} also assume independent censoring.
For machine learning-based methods, 
Random Survival Forest \cite{Ishwaran2008} ensembles survival trees build with the log-rank test splitting criterion which assumes independent censoring.
Neural network-based approaches, such as Fraggi-Simon Net \cite{Faraggi1995}, DeepSurv \cite{Katzman2018}, and CoxCC\cite{Kvamme2019} adhere to the proportional hazard assumption of the CoxPH model, and thus also mandate independence.
DeepHit \cite{Lee2018} discretizes the event time horizon and accommodates competing risks with event-specific neural networks. 
Recently, several tailored neural models such as differential equation networks \cite{Tang2022} and monotonic networks \cite{Rindt2022} have also been proposed for survival data. Unfortunately, they all assume independent or conditional independent censoring.


\begin{figure}[!t]
    \centering
    \includegraphics[width=\linewidth]{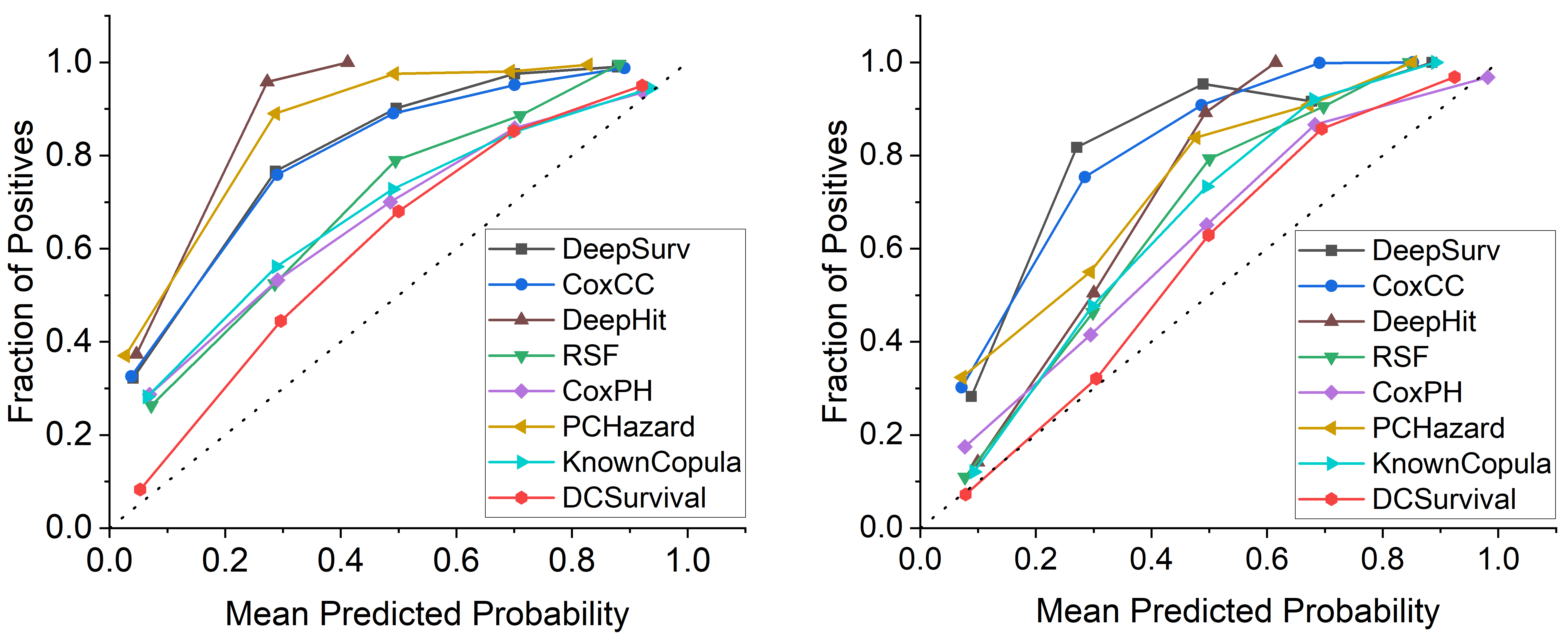} 
    \caption{Calibration plots of on test samples of the \texttt{SEER} (left) and \texttt{GBSG2} (right) datasets. The plots of better-calibrated algorithms are closer to perfect calibration (dashed black line). Best viewed in colour.}
    \label{fig:calibration}
\end{figure}

\paragraph{Dependent Censoring Methods.}
Discussion on dependent censoring in survival analysis can be traced back to the seminal work of \citet{Tsiatis1975} and \citet{Lagakos1979}.
To account for dependent censoring, several copula-based approaches have been explored by assuming the parametric family of the ground truth copula is known. Specifically, \citet{Czado2022,Deresa2021,Deresa2022} proposed statistical methods for parametric and semiparametric survival marginals with identifiability guarantee for all parameters.   
\citet{Midtfjord2022} and \citet{Uai2023} proposed to use boosting and neural network for maximizing the log-likelihood with pre-specified copulas, respectively. 
To the best of our knowledge, all existing copula-based approaches require practitioners to provide the ground truth copula, or at least its parametric form.
Our proposed {\ours} algorithm relaxes this assumption by approximating the copula with neural networks and learning their parameters.

\section{Conclusion}
In this paper, we propose {\ours}, a deep copula-based survival algorithm that does not require user-specified ground truth copula. 
{\ours} is capable of survival modelling under a wide range of censoring dependencies described by the Archimedean family of copulas.
Theoretically, we show that the parameters for both the copula and the survival distributions are identifiable under mild assumptions.
Empirically, we demonstrate that {\ours} successfully recovers the censoring dependencies and significantly reduces survival estimation bias.
Future work include moving beyond Archimedean copulas and developing neural survival marginal estimators that are identifiable.

\section*{Acknowledgments}
The authors would like to thank Prof. Ingrid Van Keilegom for her discussion and constructive comments on this work. 

Xuanhui Zhang is supported by the National Science Foundation of China (72204110).
\bibliography{aaai-23}

\newpage
\appendix
\onecolumn 
	
\section{Survival Marginals}
One of the primary goals in survival analysis is to estimate the survival function. The survival function $S$, defined as:
\begin{equation}
	S(t) = Pr (T>t),
\end{equation}
is a monotonically decreasing function characterizing the probability of subject survival past time $t$. 
A probability quantity closely related to the survival function is the distribution function $F(t) = 1-S(t)$, which represents the probability of an event occurs before $t$. 

Another desirable estimand of survival analysis is the hazard function $h(t)$. Intuitively, the hazard function is the instaneous failure rate conditional on the subject has not failed until time $t$. Specifically, $h(t)$ is defined as
\begin{equation}
	h(t) = \lim\limits_{\Delta t\to 0}\frac{Pr(t \leq T < t + \Delta t \vert T \geq t)}{\Delta t} = \frac{f(t)}{S(t)},
\end{equation}
where $f(t)= \frac{\partial}{\partial t}F(t)= -\frac{\partial}{\partial t} S(t) $ denote the death density function. Furthermore, it is straightforward to represent the hazard function $h(t)$ as:
\begin{equation}
	h(t) = \frac{f(t)}{S(t)} = -\frac{d}{dt}S(t) \cdot \frac{1}{S(t)} = -\frac{d}{dt}[\log S(t)].
\end{equation}

The cumulative hazard function is defined as the integral of the hazard function:
\begin{align}
    H(t) = \int_0^t h(u)du=-\log S(t).
\end{align}
Therefore, the survival function can be also expressed as $S(t)=\exp(-H(t))$.

\section{Censoring Mechanisms}
We focus our discussion of censoring mechanisms on right-censoring.
The censoring mechanism in most observational studies are unknown with few exceptions \cite{Leung1997}.
Specifically, there are two types of known censoring mechanism, namely Type I and Type II censoring.
Type I censoring occurs in clinical observational studies when every subject is followed up for a specified time of $C_0$ or until event time. 
Type II censoring is often used in engineering experiments where a total of $n$ devices are used in the study but, instead of continuing until all devices fail, the study is terminated when a total of pre-defined $r$ devices fail. 
Generally speaking, Type II censoring does not affect estimation because it does not affect the likelihood model. However, Type II censoring is rarely adopted in applications except engineering.
Both type I and type II censoring happen due to \textit{end of study}. 

However, as conducting trials and experiments are often expensive and time consuming, data with known censoring mechanism is rather difficult to collect.
Therefore, researchers often have to be content with observational studies where the data is passively collected and censoring happens due to \textit{lost of follow up}. 

Therefore, in most applications that use observational data, the censoring mechanism are unknown and more complicated than the Type I and Type II designs. 
Within statistical literature, there are several widely used assumptions \cite{Kleinbaum2012} (we use intuitive definitions for readability, and refer readers to the referenced textbook for mathematical definitions) :

\begin{definition}
    (\emph{Random Censoring}) Random censoring indicates that subjects who are censored at time $t$ are representative of all subjects who remain at risk at time $t$ with respect to the event of interest.
\end{definition}

\begin{definition}
    (\emph{Independent Censoring}) Independent censoring indicates that within any subgroup of interest, subjects who are censored at time $t$ are representative of all subjects in that subgroup who remain at risk at time $t$ with respect to the event of interest. 
\end{definition}


\begin{definition}
    (\emph{Non-informative censoring}) Non-informative censoring indicates the distribution of survival time $T$ provides no information about the distribution of censoring time $U$, and vice versa.
\end{definition}


Strictly speaking, independent censoring assumptions is a special case of non-informative censoring \cite{Lagakos1979}. However, since examples where censoring is non-informative but not independent are artificially crafted\cite{Williams1977}, we do not differentiate these two assumptions in our discussions.

Despite the importance of non-informative censoring assumption, it is impossible to identify whether the censoring mechanism is non-informative from observational data \cite{Tsiatis1975}. Furthermore, as Kaplan and Meier \cite{Kaplan1958} noted, ``in practice this assumption (independent censoring) deserves special scrutiny''. Therefore, it is necessary to relax the independence assumption and accommodate for dependent censoring.

\section{Derivation of Survival Likelihood}
Without assumption on the censoring mechanism, we have 
\begin{align}
    \mathcal{L}_{\text{dep}}=& \Pr(T=t, U > t |X)^{\delta} \cdot  \Pr(T > t, U=t |X)^{1-\delta} \nonumber\\
    = & \left\{ -\frac{\partial}{\partial y} \Pr(T>y, U>t )\vert_{y=t} \right\}^{\delta} 
    \cdot  \left\{ -\frac{\partial}{\partial z} \Pr(T>t, U>z ) \vert_{z=t} \right\}^{1-\delta}
    \nonumber\\ 
    = & \left\{ \int_{t}^{\infty} f_{U|T,X}(u^\ast|t,X) f_{T|X}(t|X) du^\ast \right\}^\delta 
    \cdot \left\{ \int_{t}^{\infty} f_{T|U,X}(t^\ast|u,X) f_{U|X}(u) dt^\ast \right\}^{1-\delta} \nonumber\\
    = & \left\{ f_{T|X}(t|X) \int_{t}^{\infty} f_{U|T,X}(u^\ast|t)  du^\ast \right\}^\delta 
    \cdot \left\{ f_{U|X}(t|X) \int_{t}^{\infty} f_{T|U,X}(t^\ast|u)  dt^\ast \right\}^{1-\delta} \nonumber\\
    = & \left\{ f_{T|X}(t|X) \left. \frac{\partial}{\partial u_1} \mathcal{C}(u_1,u_2) \right\vert_{\substack{ u_1=S_{T|X}(t|X) \\ u_2 = S_{U|X}(t|X) }} \right\}^{\delta} \\
    &
    \cdot \left\{ f_{U|X}(t|X) \left. \frac{\partial}{\partial u_2} \mathcal{C}(u_1,u_2) \right\vert_{\substack{ u_1=S_{T|X}(t|X) \\ u_2 = S_{U|X}(t|X) }} \right\}^{1-\delta}
\label{equation:likelihood_dep_appendix}
\end{align}

When conditional independent censoring is assumed, i.e., $T\indep C \vert X$, the likelihood can be written as
\begin{align}
    \mathcal{L}_{\text{indep}} 
    =& \Pr(T=t, U > t |X)^{\delta} \cdot  \Pr(T > t, U=t |X)^{1-\delta} \nonumber\\
    =& \{\Pr(T=t|X)\Pr(U>t|X)\}^\delta \cdot \{\Pr(T>t|X)\Pr(U=t|X)\}^{1-\delta} \nonumber\\
    =& \{f_{T|X}(t|X)S_{U|X}(t|X)\}^\delta \cdot \{f_{U|X}(t|X)S_{T|X}(t|X)\}^{1-\delta} \label{equation-indep_full}\\
    \propto & f_T(t|X)^\delta S_T(t|X)^{1-\delta}.
\end{align}

Contrasting Equation \ref{equation:likelihood_dep_appendix} and \ref{equation-indep_full}, the potential bias caused by the independence censoring assumption is evident. The independent likelihood is only unbiased if the copula partial derivatives equal to the marginal survival functions, i.e., $\mathcal{C}(u_1,u_2) = u_1 u_2$. Since the dependency structure is usually unknown in observational studies and unverifible from the data, the independent censoring assumption should be avoided in most practical applications.  

\section{Copulas}
Copulas are distribution functions of $d$-dimensional multivariate distributions with uniform margins. More formally, a function $C(u_1,\cdots,u_d):[0,1]^d \to [0,1]$ is a copula if the following hold \cite{Nelsen1998}:
\begin{itemize}
   \item (Grounded) The copula equals to $0$ if any of its argument is $0$, i.e., \[\mathcal{C}(u_1,\cdots,u_{i-1},0,u_{i+1},\cdots,u_d)=0.\]
   \item (Uniform) The copula equals to $u$ if one argument is $u$ and all others are $1$, i.e., \[\mathcal{C}(1,\cdots,1,u,1,\cdots,1)=u.\]
   \item ($d$-increasing) For all $u=(u_1,\cdots,u_d)$ and $v=(v_1,\cdots,v_d)$ where $u_i<v_i$ for all $i\in [d]$, $$\sum\limits_{(w_1,\cdots,w_d)\in\times_{i=1}^d \{u_i,v_i\} } (-1)^{\vert i:w_i=u_i\vert} \mathcal{C}(w_1,\cdots,w_d) \geq 0 $$
   \item ($d$-increasing) For each hyperrectangle $B=\Pi_{i=1}^d [u_i,v_i] \subseteq [0,1]^d$, the C-volume of $B$ is non-negative, i.e., $\int_B dC(u) \geq 0$. 
\end{itemize}
Intuitively, the $d$-increasing property states that the probability of any $d$-dimensional hyperrectangle is non-negative.

Copulas have been widely applied to areas such as quantitative finance and engineering thanks to \emph{Sklar's theorem} \cite{Sklar1959}, which states that any $d$-dimensional continuous joint distribution can be \textit{uniquely} expressed with $d$ uniform marginals and a copula $\mathcal{C}$. Formally, Sklar's theorem states that

Sklar's theorem \cite{Sklar1959} states that there exist a unique copula $\mathcal{C}$ such that the joint distribution of event and censoring can be expressed as
\begin{equation}
	F_{T,C}(t,c) = \mathcal{C} (F_T(t), F_C(c)).
	\label{equation-sklar}
\end{equation}
According to probability integral transform, we know that $F_T(t)$ and $F_C(c)$ are uniform in $[0,1]$. 
Note that instead of Equation \ref{equation-sklar} we can also assume a survival copula such that $P(T>t, C>c) = \tilde{\mathcal{C}}(1-F_T(t), 1-F_C(c))$ where $\tilde{C}(u,v)= u+v-1+C(1-u,1-v)$. However, as there is no substantial difference between these two expressions \cite{Nelsen1998}.

\section{Theorem Proofs}
For the sake of conciseness, we omit the covariate $X$ from the notations. 
As we do not assume conditionally independent censoring, the derivations with covariates follow similarly. 
\setcounter{theorem}{4}

\begin{lemma}
Suppose $\varphi$ is differentiable on $(0,1)$. If $\lim_{u\to 0} \varphi' (u) \in (-\infty, 0)$, then 
\[\lim_{t\to 0} \left. \frac{\partial}{\partial u_1} \mathcal{C}(u_1,u_2) \right\vert_{\substack{ u_1=S_{T}(t) \\ u_2 = S_{U}(t) }} = 1.
\]
\end{lemma}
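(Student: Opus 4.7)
My plan is to expand the partial derivative in closed form using the Archimedean representation in Equation~\ref{equation-generator-phi}, then reduce the limit to an indeterminate form $c/c$ where $c$ is the finite negative number guaranteed by the hypothesis.

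\textbf{Step 1 (closed-form partial derivative).} Applying the chain rule to $\mathcal{C}(u_1,u_2)=\varphi(\varphi^{-1}(u_1)+\varphi^{-1}(u_2))$ together with the inverse-function identity $(\varphi^{-1})'(u)=1/\varphi'(\varphi^{-1}(u))$ yields
\[
\frac{\partial}{\partial u_1}\mathcal{C}(u_1,u_2)=\frac{\varphi'\bigl(\varphi^{-1}(u_1)+\varphi^{-1}(u_2)\bigr)}{\varphi'\bigl(\varphi^{-1}(u_1)\bigr)}.
\]
This expression is well-defined near the upper corner of $(0,1)^2$: because $\varphi$ is a nontrivial completely monotone generator it is strictly decreasing, so $\varphi'<0$ on the interior and $\varphi^{-1}$ is a proper continuous inverse onto a neighbourhood of $0$.

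\textbf{Step 2 (behaviour of the arguments as $t\to 0$).} By the standard convention $S_T(0)=S_U(0)=1$ and right-continuity, $S_T(t)\to 1$ and $S_U(t)\to 1$ as $t\to 0^+$. Since any valid Archimedean generator satisfies $\varphi(0)=1$, we have $\varphi^{-1}(1)=0$; continuity of $\varphi^{-1}$ at $1$ (inherited from strict monotonicity and continuity of $\varphi$) then gives $\varphi^{-1}(S_T(t))\to 0$ and $\varphi^{-1}(S_U(t))\to 0$, so their sum also tends to $0$.

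\textbf{Step 3 (evaluating the limit).} Substituting the limits of Step~2 into the closed-form expression of Step~1 and invoking the hypothesis $\lim_{u\to 0^+}\varphi'(u)=c\in(-\infty,0)$, both the numerator and the denominator converge to the same finite nonzero value $c$, and therefore the ratio converges to $c/c=1$. An analogous argument with $u_1$ and $u_2$ swapped establishes the companion limit for $\partial \mathcal{C}/\partial u_2$, giving the full content of condition~(C2) of Theorem~\ref{theorem-identifiability}.

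\textbf{Main obstacle.} The delicate point is precisely the hypothesis that $c$ is \emph{finite} and strictly negative rather than $-\infty$ or $0$: without finiteness the quotient is an indeterminate $\infty/\infty$ form that need not equal $1$ (as happens for some Clayton generators outside the stated range), while $c=0$ would make the Step~1 denominator vanish. A minor supporting detail is justifying the exchange of limit and $\varphi'$, which is immediate because complete monotonicity of $\varphi$ forces $\varphi'$ to be monotone, and a monotone function with a finite one-sided limit is continuous at that endpoint.
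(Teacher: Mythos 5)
Your proposal is correct and follows essentially the same route as the paper's own proof: both expand $\partial\mathcal{C}/\partial u_1$ via the chain rule into the ratio $\varphi'\bigl(\varphi^{-1}(u_1)+\varphi^{-1}(u_2)\bigr)/\varphi'\bigl(\varphi^{-1}(u_1)\bigr)$, observe that both arguments tend to $0$ because $S_T(t),S_U(t)\to 1$ and $\varphi^{-1}(1)=0$, and conclude the limit is $c/c=1$ using the finiteness and nonvanishing of $c$. Your write-up is in fact somewhat more careful than the paper's (which contains a typo collapsing $\varphi^{-1}(S_T(t))$ to $t$ in its final display), but the underlying argument is identical.
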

\begin{proof}
From the definition of $S(t)$ we know  that $\lim_{t\to 0} S_T(T) = \lim_{t\to 0} S_U(T) =1$.
Furthermore, we have $\lim_{t\to 1}\varphi^{-1}(t)=0$ and $\varphi(0)=1$ from the definition of the generator. Lastly, utilizing the definition of Archimedean copula, i.e., $\mathcal{C}=\varphi(\varphi^{-1}(u_1)+ \varphi^{-1}(u_2))$, we can write
\begin{align}
     & \lim_{t\to 0} \left. \frac{\partial}{\partial u_1} \mathcal{C}(u_1,u_2) \right\vert_{\substack{ u_1=S_{T}(t) \\ u_2 = S_{U}(t) }} \nonumber\\ 
    =& \lim_{t\to 0} \frac{\varphi'( \varphi^{-1}(S_{U}(t))+ \varphi^{-1}(S_{T}(t))) }{ \varphi'( \varphi^{-1}(S_{T}(t))} =  \lim\limits_{t\to 0} \frac{\varphi'(t)}{\varphi'(t)}= \frac{c}{c} = 1
\end{align}
The case for $\partial \mathcal{C}(u_1, u_2)/\partial u_2$ can be similarly shown.
\end{proof}

\begin{theorem}
All copulas expressed by the neural network specified by $\varphi^{nn}$ satisfy condition (C2) of Theorem \ref{theorem-identifiability}.
\end{theorem}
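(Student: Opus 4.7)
My plan is to leverage Lemma \ref{lemma-c2} by showing that the generator $\varphi^{nn}$ produced by the network satisfies its hypotheses: differentiability on $(0,\infty)$ and a finite, strictly negative derivative limit at $0$. The core observation is that $\varphi^{nn}$ is, by construction, a finite convex combination of negative exponentials, which makes both properties essentially transparent once that representation is made explicit.

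First, I would prove by induction on the layer index $\ell$ that every neuron $\varphi^{nn}_{\ell,i}$ can be written as $\sum_{k=1}^{N_{\ell,i}} c_{\ell,i,k}\, e^{-\lambda_{\ell,i,k} u}$, with $c_{\ell,i,k}\geq 0$, $\sum_k c_{\ell,i,k} = 1$, and $\lambda_{\ell,i,k}\geq 0$. The base case is immediate, since $\varphi^{nn}_{0,1}(u) = 1 = e^{-0\cdot u}$. For the inductive step, the recursion $\varphi^{nn}_{\ell,i}(u) = e^{-B_{\ell,i} u}\sum_j A_{\ell,i,j}\,\varphi^{nn}_{\ell-1,j}(u)$ first forms a convex combination (using the row-stochasticity of $A_\ell$), which preserves the mixture form, and then multiplies by $e^{-B_{\ell,i} u}$, which merely shifts each rate $\lambda$ to $\lambda + B_{\ell,i}$ and keeps the weights nonnegative and summing to one. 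The output layer is a further convex combination, so $\varphi^{nn}(u) = \sum_{k=1}^{N} c_k e^{-\lambda_k u}$ for some finite $N$, with $c_k\geq 0$, $\sum_k c_k = 1$, and $\lambda_k\geq 0$.

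Given this representation, differentiability on $(0,\infty)$ is immediate from term-by-term differentiation, and
\begin{equation*}
\lim_{u\to 0^+}\left(\varphi^{nn}\right)'(u) \;=\; -\sum_{k=1}^N c_k\,\lambda_k,
\end{equation*}
which is finite because $N$ is finite and strictly negative as soon as at least one pair $(c_k,\lambda_k)$ has both components positive. The latter is automatic for any parameterization that yields a genuine (non-degenerate) Archimedean copula: if every rate with positive weight were zero, $\varphi^{nn}$ would be constantly equal to one, violating $\lim_{u\to\infty}\varphi^{nn}(u) = 0$ required for a valid generator. Hence $\lim_{u\to 0^+}(\varphi^{nn})'(u)\in(-\infty,0)$, and Lemma \ref{lemma-c2} delivers condition (C2).

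The main obstacle is purely bookkeeping in the inductive step: I need to be careful to track how multiplication by $e^{-B_{\ell,i} u}$ interacts with the convex-combination structure, and to verify that the index set of rates simply grows additively through the layers. A secondary subtlety worth flagging explicitly is the non-degeneracy assumption ensuring $\sum_k c_k\lambda_k>0$; this is mild but must be stated, since Lemma \ref{lemma-c2} requires strict negativity of the limit rather than mere nonpositivity.
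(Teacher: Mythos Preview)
Your proposal is correct and takes essentially the same approach as the paper: an induction over layers to show that $\varphi^{nn}$ is a finite convex combination of negative exponentials, followed by an appeal to Lemma~\ref{lemma-c2}. Your version is in fact more careful than the paper's, since you explicitly compute $\lim_{u\to 0^+}(\varphi^{nn})'(u)=-\sum_k c_k\lambda_k$ and flag the non-degeneracy needed for strict negativity, whereas the paper simply asserts $\varphi'(u)<0$ from the exponential form.
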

\begin{proof}
The theorem can be proved by induction with respect to the indexes of the hidden layers. 
For notation simplicity, we assume that each hidden layer contains the same number of $H$ hidden units. The results can be straightforwardly extended to the general cases.
If $\varphi^{nn}$ has one hidden layer, i.e. $L=1$, the output from the last hidden layer of $\varphi^{nn}$ can be straightforwardly expressed as
\begin{align}
    \varphi^{nn}_{1}(u) = \sum\limits_{k=1}^{H} A_{1,j,k} e^{-B_{1,j} \cdot u} .
\end{align}
Since the output layer performs convex combination of $\varphi_1^{nn}(u)$, $B_{i,j}$ is positive, $A_{l,j,k}$ is non-negative with $\sum_j A_{l,i,j}=1$, we can express $\varphi^{nn}(u)$ as $\varphi^{nn}(u) = \sum\limits a \cdot \exp (-b\cdot t)$ with some $a, b>0$.
It follows naturally that $\varphi'(u) <0$ for all $u$. Utilizing Lemma 4, the theorem is satisfied when $L=1$.

Assume that $\varphi^{nn}(u)$ has $L$ hidden layer, and the output of its last hidden layer can be expressed as $\varphi^{nn}_L(u) = \sum\limits a \cdot \exp (-b\cdot t)$ with some $a, b>0$, we have $\varphi'(u)<0$ when $L=n+1$ using the fact that the output layer is a convex combination of $\varphi^{nn}_L(u)$.
Utilizing Lemma 4 again, the theorem is satisfied when for $L=n+1$, completing the induction. 
\end{proof}

\newcommand{\algrule}[1][.2pt]{\par\vskip.5\baselineskip\hrule height #1\par\vskip.5\baselineskip}
\begin{algorithm}[t]
    \caption{Survival Modeling with Unknown Dependent Censoring}
    \textbf{Input:} $\mathcal{D}$: right-censored survival dataset $\mathcal{D}= (X_i, t_i, \delta_i)$ for $i=1,\dots,N$; 
                    $\mathcal{M}_T$: parametric survival model that emits the survival marginals $\hat{S}_{T|X}(t|X), \hat{f}_{T|X}(t|X)$; 
                    $\mathcal{M}_U$: parametric survival model that emits the censoring marginals $\hat{S}_{T|X}(t|X), \hat{f}_{U|X}(t|X)$;
                    Learning rate $\alpha$; Number of iterations $S$
    \\
    \textbf{Output:} $\varphi_{nn}, \psi_T, \psi_U$, the learned copula, survival, and censoring models
    \algrule
    \begin{algorithmic}[1]
        \State $\mathcal{M}_T \leftarrow$ Instantiate the survival marginal $(M_1; \psi_T)$;
        \State $\mathcal{M}_U \leftarrow$ Instantiate the censoring marginal $(M_2; \psi_U)$;
        \For{$i=1,\dots,S$}
        \State Compute $\mathcal{L}_{\text{dep}}$ for a mini-batch of $\mathcal{D}$;
        \State $\psi_T,\psi_U \leftarrow \text{AdamWUpdate}(\mathcal{L}_{\text{dep}}, (\psi_T, \psi_U), \alpha)$;
        \State $\Phi \leftarrow \text{AdamWUpdate}(\mathcal{L}_{\text{dep}}, \varphi_{nn}, \alpha)$;
        \EndFor
    \end{algorithmic}
\label{algo:algorithm}
\end{algorithm}

\section{Additional Experiment Details and Results}

\subsection{Experiment Setting and Implementation Details}
Experiments are conducted on a PC with one NVIDIA RTX4090 GPU.
We utilize the Pytorch for implementing all neural networks and automatic differentiation. 
Tensors are computed with double precision (fp64) as the inversion of $\varphi$ mandates numerical precision. 
When using Newton’s method to compute the inverse $\varphi_{nn}^{-1}$, we terminate when the error is less than $\num{1e-12}$. 

For all our experiments we set $\varphi_{nn}$ with $L = 2$ and $H_1 = H_2 = 10$, i.e., the copula representation contains two hidden layers with each of width $10$. 
The network is small but sufficient for the dependency structure since the generator $\varphi_{nn}$ is only $1$-dimensional. 
$\Phi_B$ and $\Phi_B$ were uniformly initialized in the range $[0, 1]$ and $(0, 2)$. 
We use AdamW \cite{Loshchilov2018} for optimization and use 30\% of the training samples for evaluating the validation log-likelihood. 
No further hyperparameter tuning was performed. An illustration for the network of $\varphi_{nn}$ can be found in Figure \ref{figure:illustration-network}.

\begin{figure*}
    \centering
    \includegraphics[width=\linewidth]{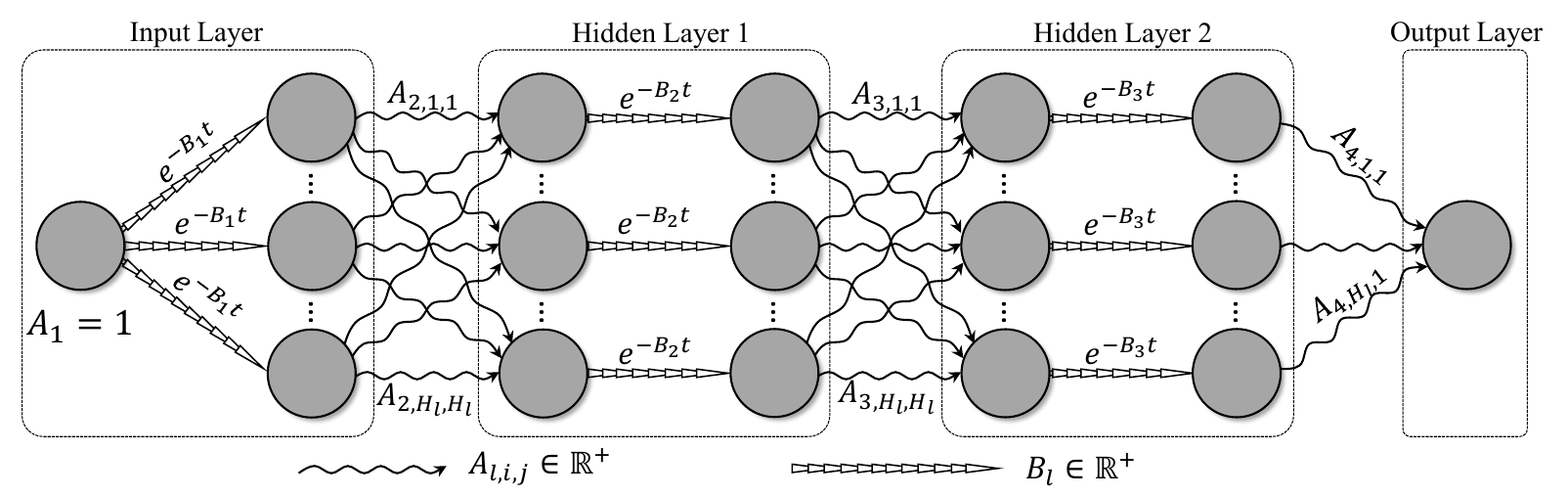}
    \caption{Network structure for representing the Archimedean copula generator $\varphi_{nn}$. The coefficients of $A$ are parameterized by Softmax to ensure a convex combination. $B$ is non-negative such that $\exp(-Bt)$ are negative exponentials.}
    \label{figure:illustration-network}
\end{figure*}

\begin{table*}[t]
\centering
\caption{The Archimedean copulas used for generating dependence among survival and censoring outcomes and evaluating the performances of {\ours}. For Frank copula, $D_1$ is the Debye function of the first kind, i.e., $D_1(\theta) = \frac{1}{\theta}\int^\theta \frac{t}{e^t-1}dt$.}
\begin{tabular}{l l  l l  l}
\hline
Copula & $C_\theta (u,v)$ & $\varphi_\theta(t)$ & Kendall's $\tau$ \\
\hline
Clayton & $(u^{-\theta} + v^{-\theta}  -1 )^{\frac{1}{\theta}}$ & $\frac{1}{\theta}(t^{-\theta} - 1)$ & $\frac{\theta}{\theta+2}$ \\
Gumbel & $\exp[-((\log(u))^{\theta} + (-\log(u))^{\theta})^{\frac{1}{\theta}}]$ & $(-\log t)^\theta $ & $\frac{\theta-1}{\theta}$ \\
Frank & $-\frac{1}{\theta} \log[1+\frac{(e^{-\theta u} -1)(e^{\theta v)-1}}{e^{-\theta }-1}$ & $-\log\frac{e^{-\theta t}-1}{e^{-\theta}-1}$& 1+ $\frac{4}{\theta}(D_1(\theta)-1)$ \\
Independent  & $uv$ & $-\log t$ & $0$ \\
\hline
\end{tabular}
\label{table:copulas}
\end{table*}

\begin{algorithm}[hbt]
    \caption{Synthetic Data Generating Process}\label{euclid}
    \textbf{Input:} $X$: set of covariates; $\nu_T$, $\rho_T$; $\nu_U$, $\rho_U$: parameters of the marginal distribution; $\psi_T(X)$, $\psi_U(X)$: risk functions; $C_\theta$: a pre-defined copula parameterized by $\theta$.
    \\
    \textbf{Output:} $\mathcal{D}= (X_i, t_i, \delta_i)$ for $i=1,\dots,N$.
    \algrule
    \begin{algorithmic}[1]
        \State $\mathcal{D}=\emptyset$;
        \For{$i=1,\dots,N$}
        \State Sample $u_{1}\sim \mathcal{U}_{[0,1]}$ and $u_{2}\sim \mathcal{U}_{[0,1]}$ such that $F(u_{1},u_{2})=\mathcal{C}_\theta( F(u_{1}, F(u_{2}))$;
        \State $T\leftarrow \left(\frac{-\log u_1}{\exp \psi_T(X)}\right)^{\frac{1}{\nu_T}}\cdot \rho_T$;
        \State $U\leftarrow \left(\frac{-\log u_2}{\exp \psi_U(X)}\right)^{\frac{1}{\nu_U}}\cdot \rho_U$;
        \State $D \leftarrow D \cup \{(X_i, \min(T_i,U_i), \mathbb{I}_{[T_i<U_i]})\}$.
        \EndFor
    \end{algorithmic}
\label{algo:dgp}
\end{algorithm}

\begin{algorithm}[hbt]
    \caption{Semi-Synthetic Censoring Inducing Procedure}\label{euclid}
    \textbf{Input:} $X$: set of covariates; $T$: set of dependent variables; $\mathcal{C}:$ A copula specifying the dependency structure.
    \\
    \textbf{Output:} $\mathcal{D}= (X_i, t_i, \delta_i)$ for $i=1,\dots,N$. A censored dataset where the joint distribution of $T$ and $U$ are governed by the copula $\mathcal{C}$.
    \algrule
    \begin{algorithmic}[1]
        \State $M_T \leftarrow \text{Weibull} (X,Y,\bm{1}^N)$; Fit event model using Weibulll distribution
        \State $M_U \leftarrow M_T$
        \State $M_U.\nu \leftarrow M_T.\nu/0.8$; Reduce variance for the censoring distribution
        \State $\mathcal{D}=\emptyset$;
        \For{$i=1,\dots,N$}
        \State $u_i\leftarrow S_T(Y_i)$; obtain event quantile
        \State $v_i \sim \mathcal{C}(\cdot \vert u_i)$  Conditionally sample censoring quantile from $\mathcal{C}$
        \State $U_i \leftarrow S_U(v_i)$ Obtain censoring time with probability inverse transform
        \State $D \leftarrow D \cup \{(X_i, \min(T_i,U_i), \mathbb{I}_{[T_i<U_i]})\}$.
        \EndFor
    \end{algorithmic}
\label{algo:dgp-semi}
\end{algorithm}

We implement {\ours} with \texttt{PyTorch} 2.0.0. The pseudo code for the algorithm procedure can be found in Algorithm \ref{algo:algorithm}. 
{\ours} uses the same hyper-parameters across all experiments except the learning rate. 
We set the learning rate to $1e-4$ for the synthetic datasets and the semi-synthetic datasets \texttt{STEEL} and \texttt{Airfoil}. For the real-world dataset \texttt{GBSG2} and \texttt{SEER}, we use the learning rate of $1e-5$.
All experiments are conducted with an NVIDIA RTX4090 GPU where the average training speed is approximately 3 epoch/s. However, as {\ours} uses double precision tensors, access to professional-grade GPU such as a NVIDIA V100 (7.8 TFLOPS for V100 vs 1.3 TFLOPS for RTX4090) will significantly boost the training speed. 

For implementation of the compared algorithms, we use the \textsc{PyCox} package \cite{Kvamme2019} which implements \textsc{DeepSurv}, \textsc{DeepHit}, \textsc{CoxCC}, \textsc{CoxTime} and \textsc{PCHazard}. For \textsc{Random Survival Forest} and \textsc{CoxPH} methods, we utilize the implementation provided in \textsc{Scikit-Survival} \cite{sksurv}.
The code of \textsc{KnownCopula} can be found at https://github.com/rgklab/copula\_based\_deep\_survival.
For classical survival methods such as \textsc{Random Survival Forest} and \textsc{CoxPH}, we use the default parameters. For all methods that utilize deep neural networks, we use the same multi-layer perceptron (MLP) structure of $[32,32,32]$ and $ReLU$ activation function. The same network structure is also used for the covariate representation in {\ours}, albeit {\ours} uses the hyperbolic tangent activation function. A validation set is used for all deep learning based algorithms, which consists of 30\% of the training samples.

To ensure a fair comparison among all algorithms, we use the random seed for all the data generation procedure and train/val/test set splits. In other words, for each of the ten repeats, every algorithm use the same training, validation and test samples.

\subsection{Datasets}
We provide the pseudocode for generating the synthetic datasets in Algorithm \ref{algo:dgp}, and the pseudocode for inducing censoring in the semi-synthetic datasets in Algorithm \ref{algo:dgp-semi}. 
Furthermore, the details of different Archimedean copulas used during simulation and the relationships between $\theta$ and Kendall's $\tau$ are provided in Table \ref{table:copulas}. 

For the real-world \texttt{GBSG2} dataset, it can be accessed using the \textsc{lifelines} package \cite{DavidsonPilon2023}.
The \texttt{SEER} dataset can be accessed using the \textsc{SeerStat} software downloaded from https://seer.cancer.gov/seerstat/.

\subsection{Results from Learning Dependence under Non-linear Risk}
We present the scatter plot, cumulative distribution plot, and log-probability density plots for learning the dependency structure from right-censored observations with non-linear risks in Figure \ref{fig:nonlinear-copula}. It can be seen that {\ours} successfully learns the underlying copulas governing the dependency, as the samples from learned models closely resemble the ground truth ones. 

\begin{figure*}
	\begin{subfigure}{.45\textwidth}
		\centering
		\begin{tabular}{ccc}
			\includegraphics[width=.32\linewidth]{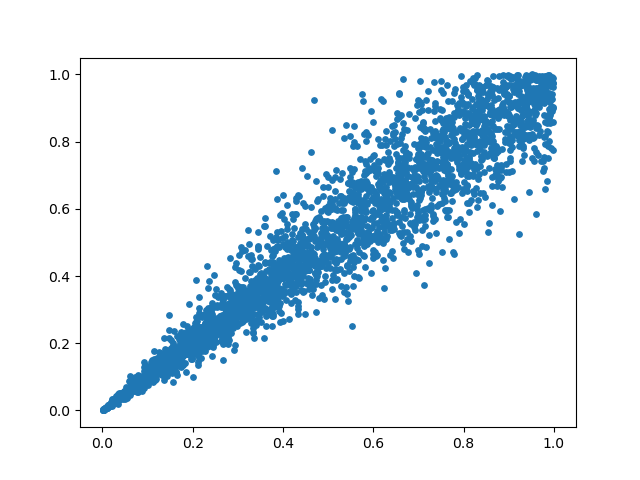} & 
			\includegraphics[width=.32\linewidth]{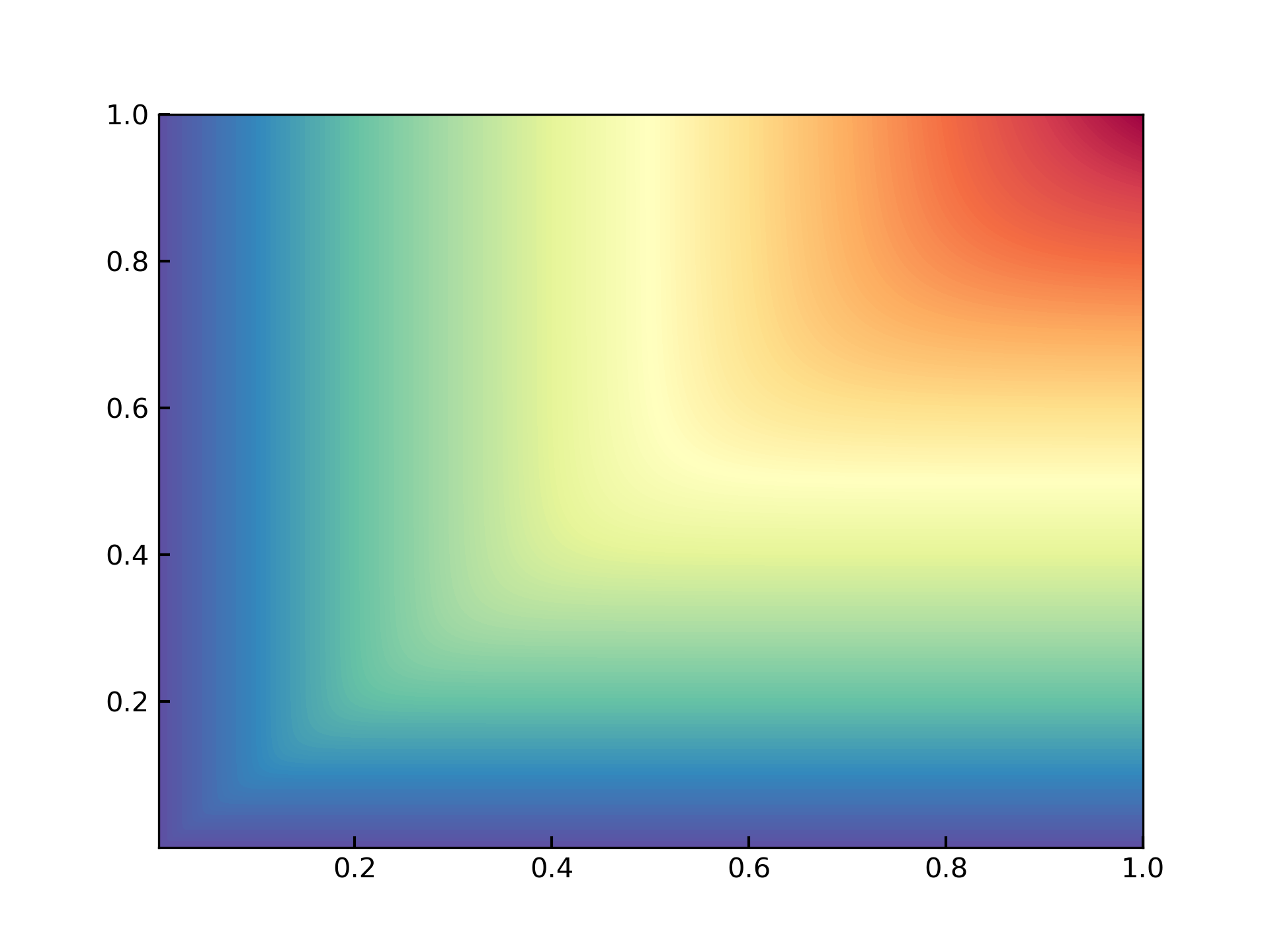} & 
			\includegraphics[width=.32\linewidth]{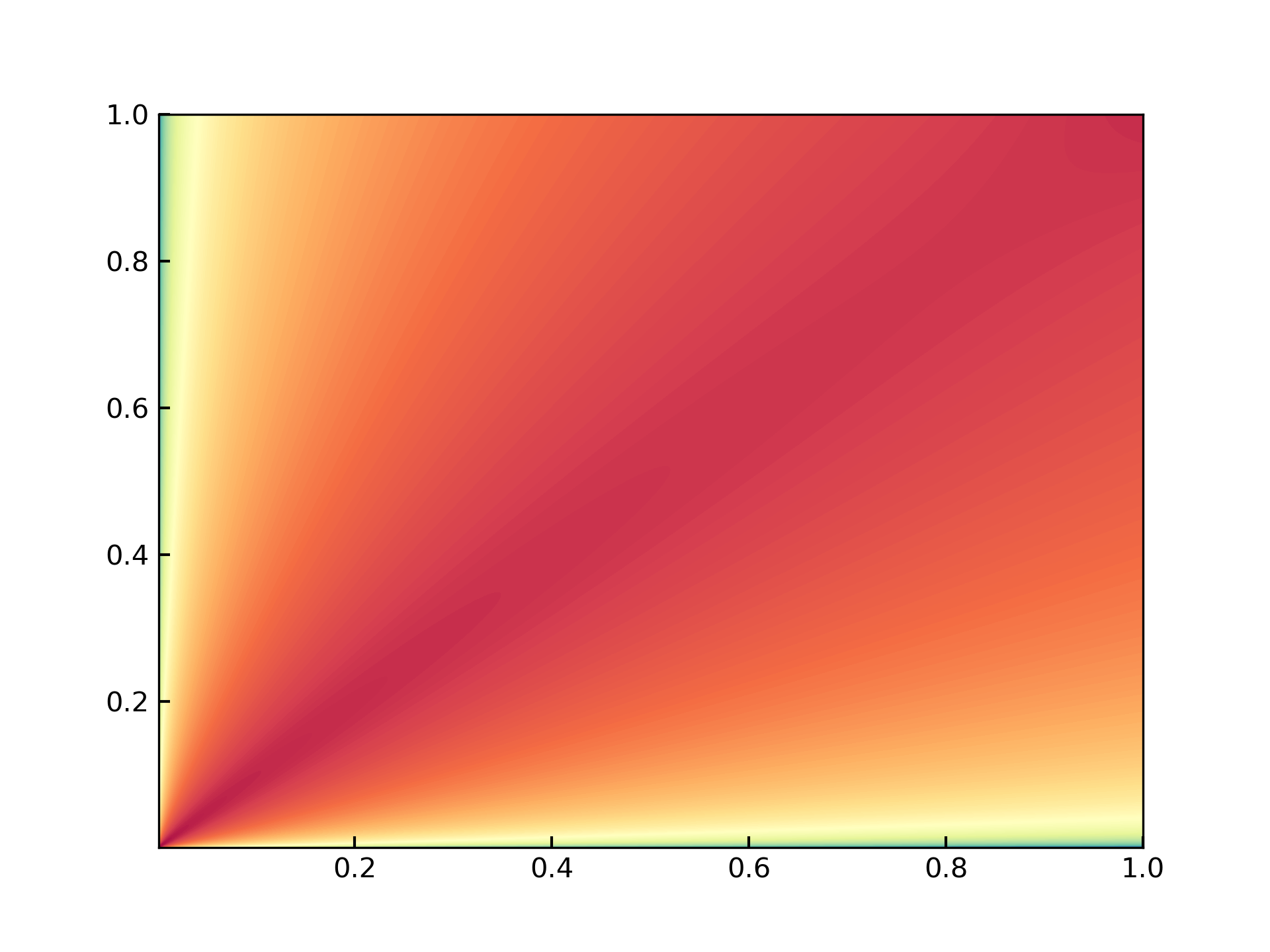}\\
		\end{tabular}
	\end{subfigure}
	\hfill
	\begin{subfigure}{.45\textwidth}
		\centering
		\begin{tabular}{ccc}
			\includegraphics[width=.32\linewidth]{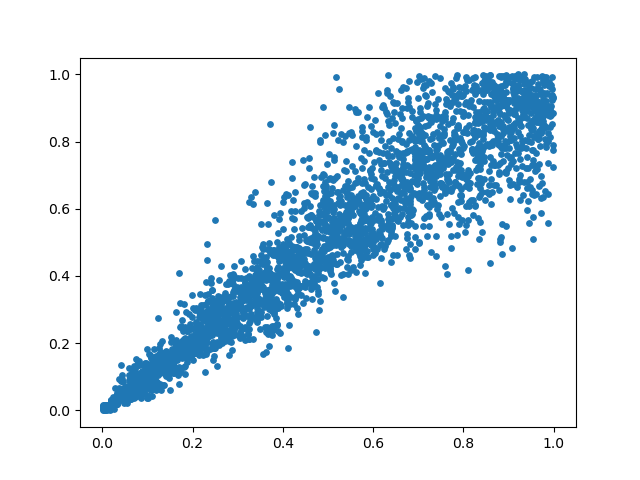} & 
			\includegraphics[width=.32\linewidth]{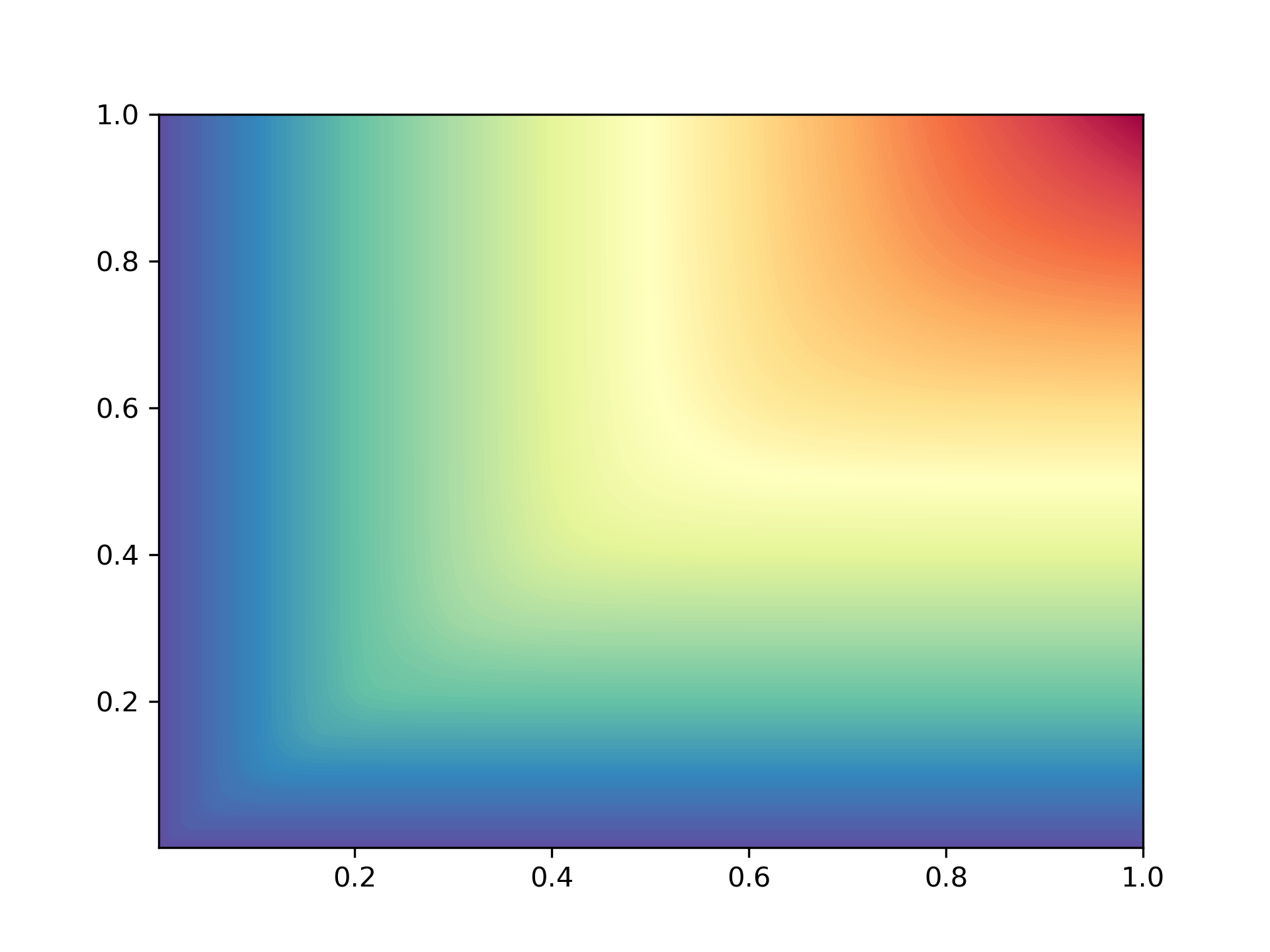} & 
			\includegraphics[width=.32\linewidth]{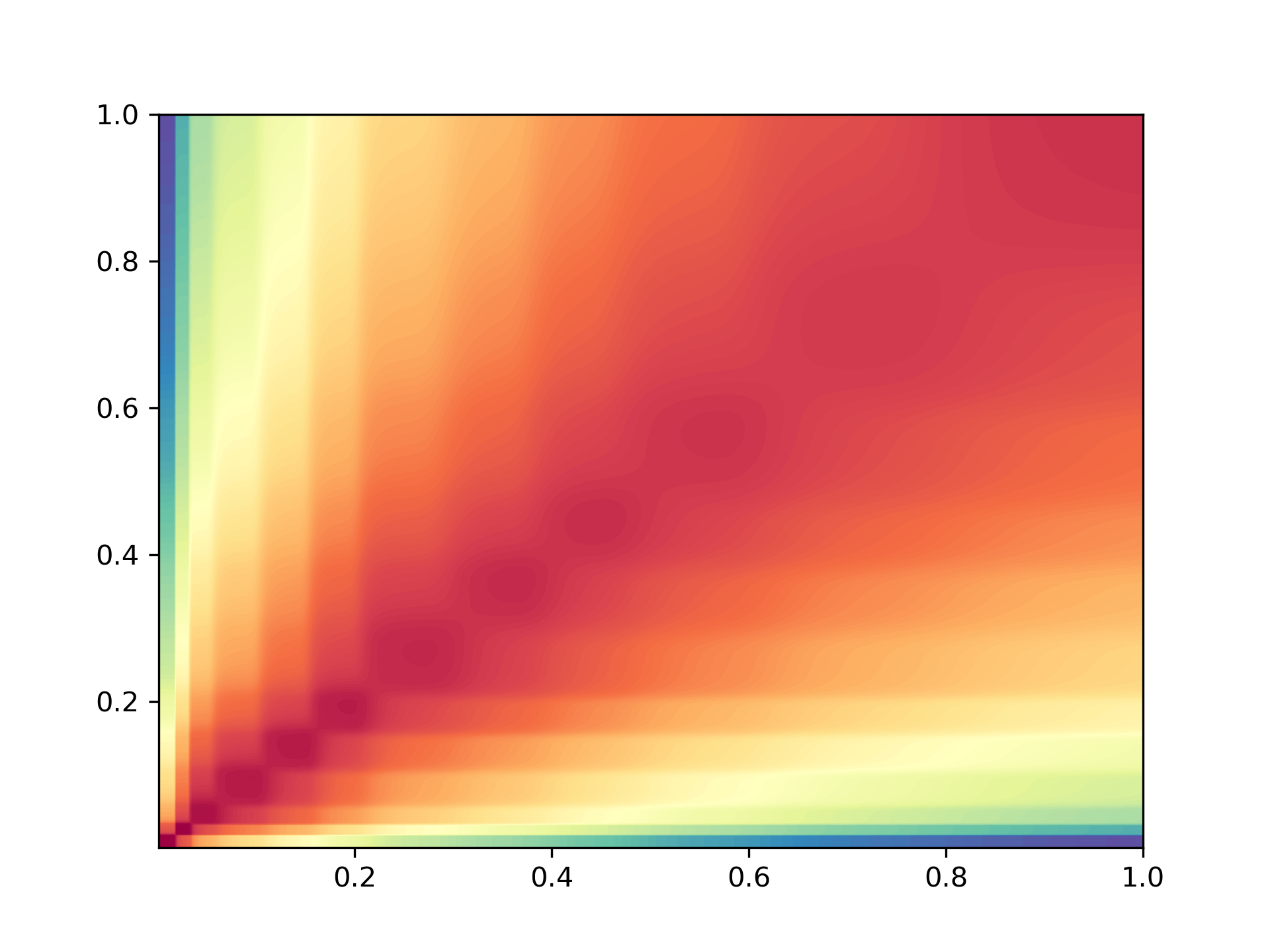} \\
		\end{tabular}
	\end{subfigure}
	\\
	\begin{subfigure}{.45\textwidth}
		\centering
		\begin{tabular}{ccc}
			\includegraphics[width=.32\linewidth]{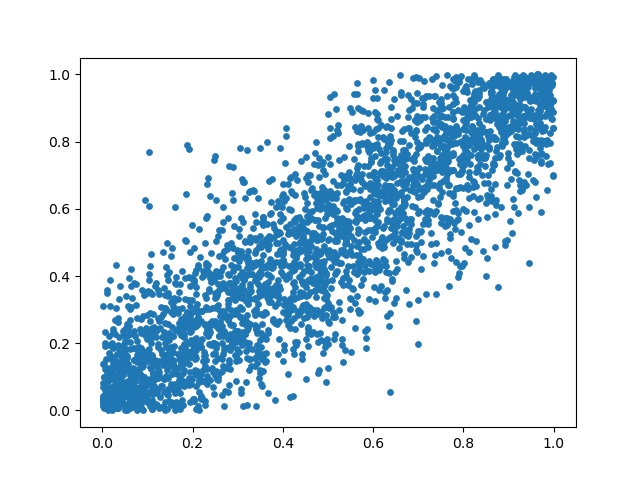} & 
			\includegraphics[width=.32\linewidth]{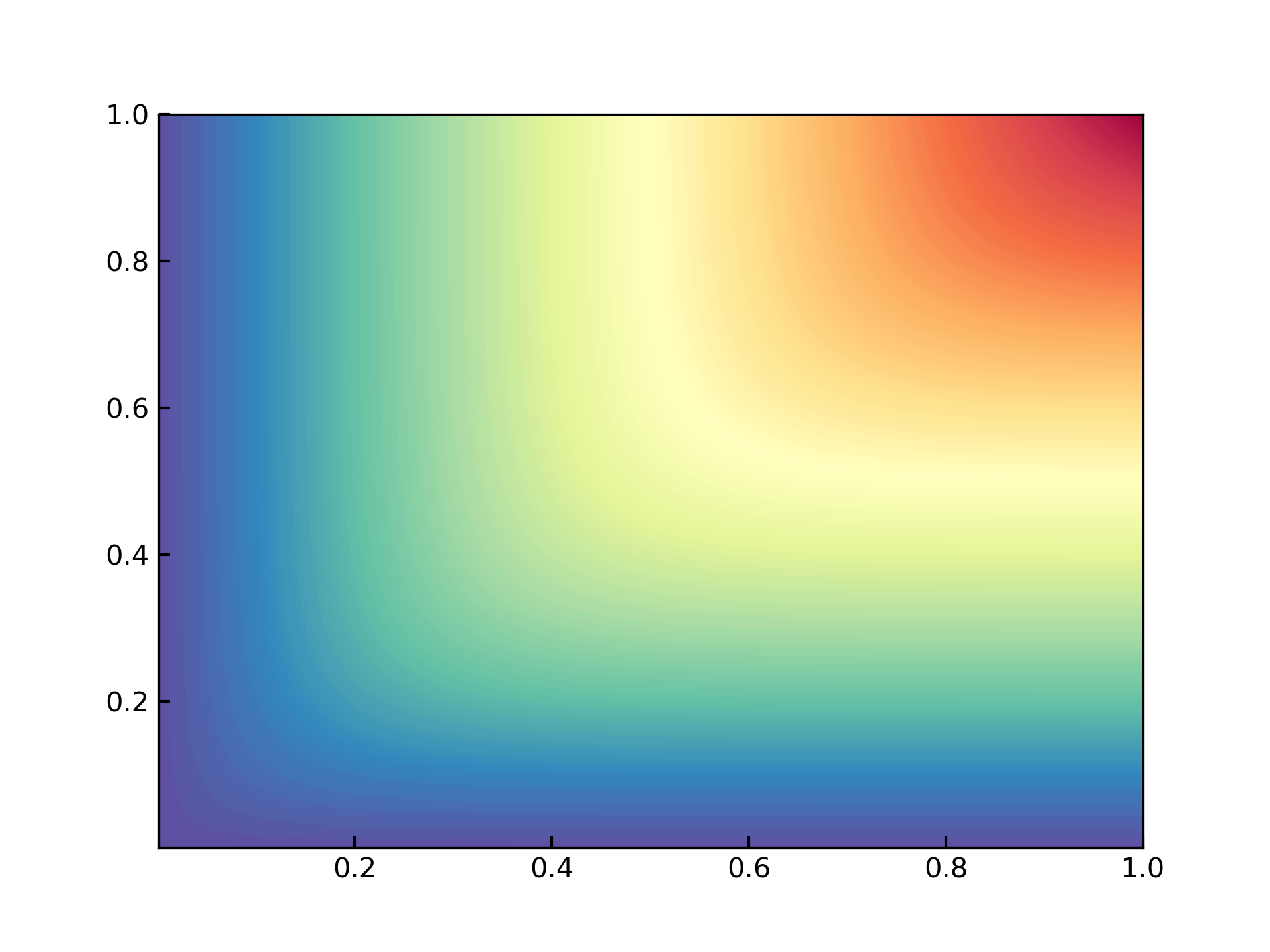} & 
			\includegraphics[width=.32\linewidth]{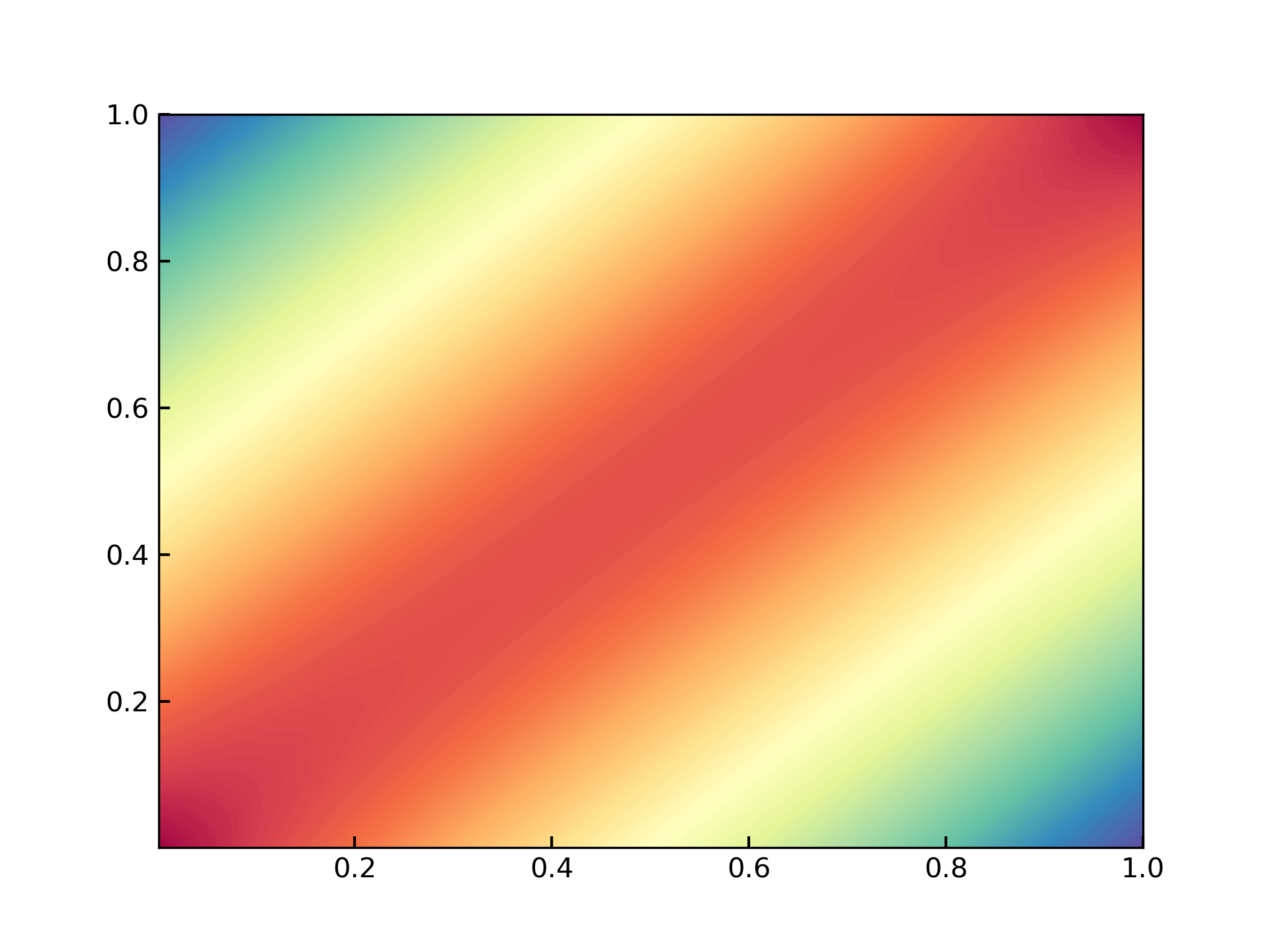}\\
		\end{tabular}
	\end{subfigure}
	\hfill
	\begin{subfigure}{.45\textwidth}
		\centering
		\begin{tabular}{ccc}
			\includegraphics[width=.32\linewidth]{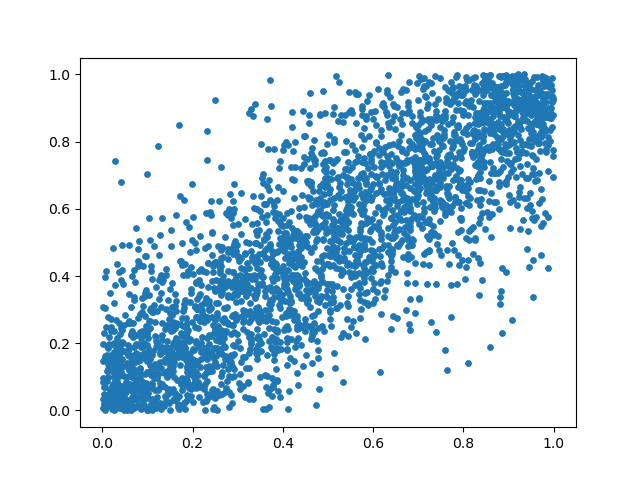} & 
			\includegraphics[width=.32\linewidth]{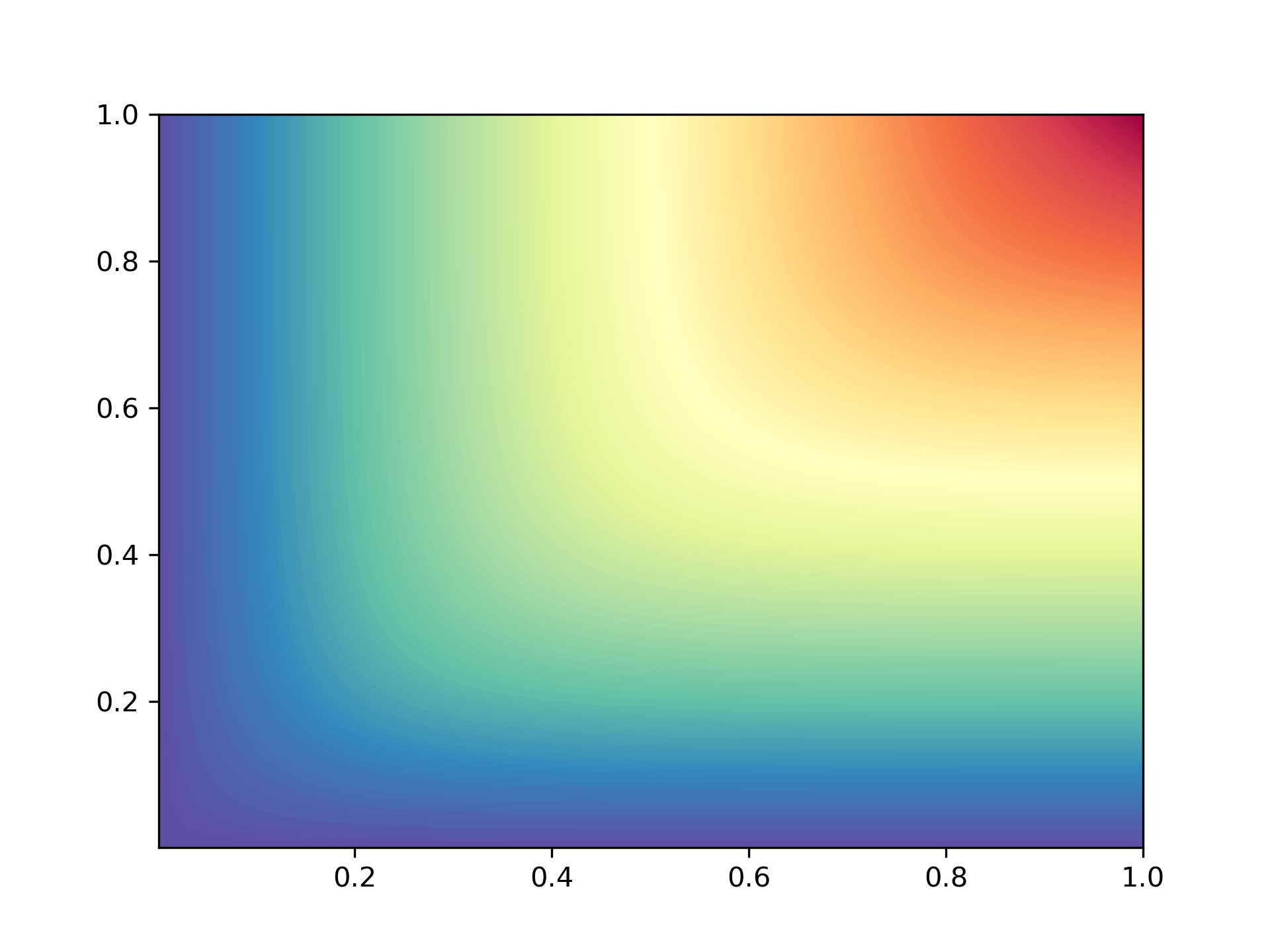} & 
			\includegraphics[width=.32\linewidth]{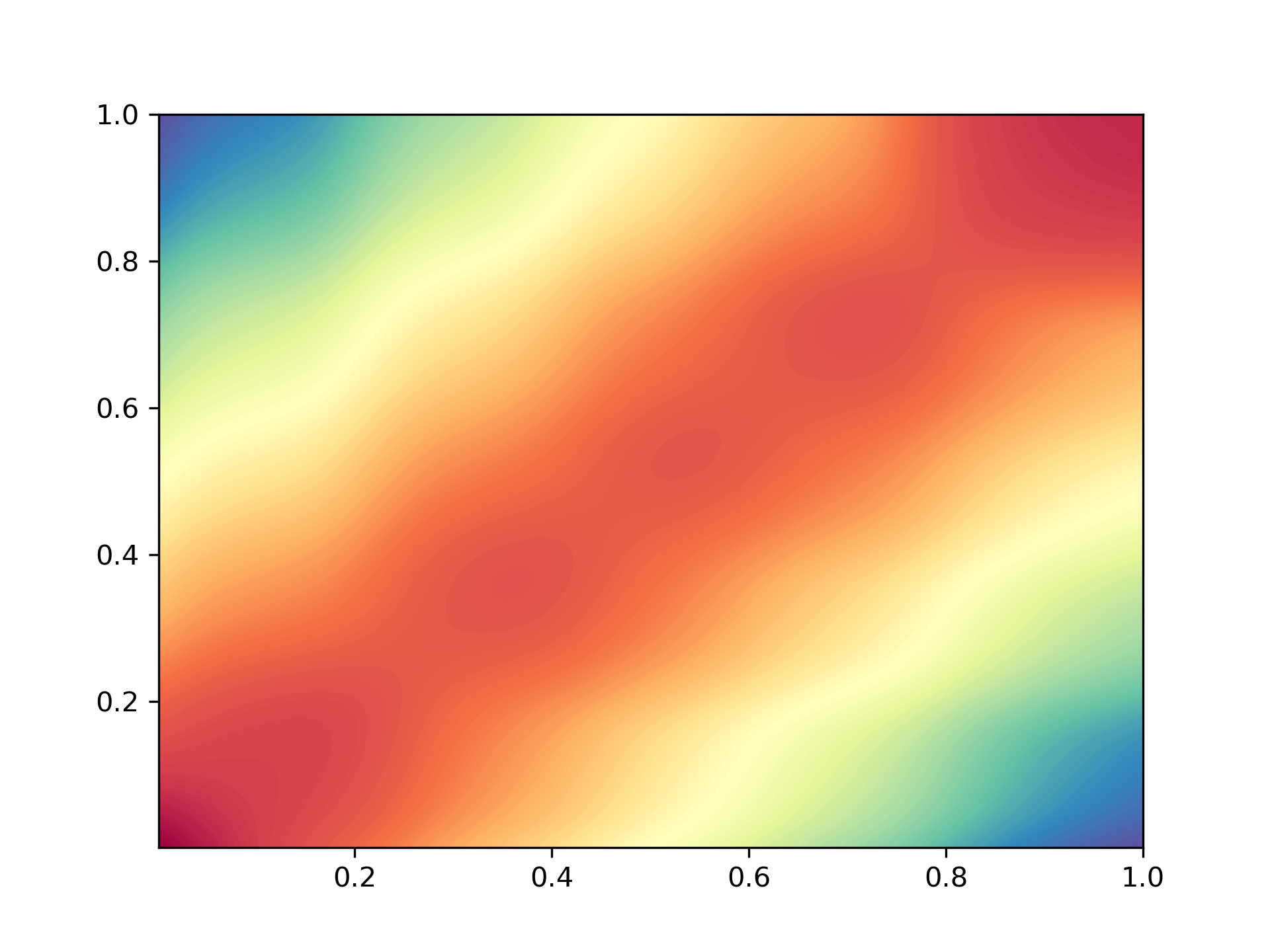} \\
		\end{tabular}
	\end{subfigure}	
	\\
	\begin{subfigure}{.45\textwidth}
		\centering
		\begin{tabular}{ccc}
			\includegraphics[width=.32\linewidth]{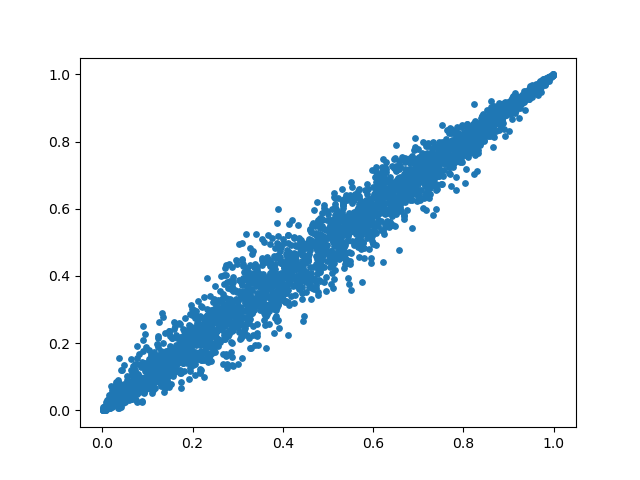} & 
			\includegraphics[width=.32\linewidth]{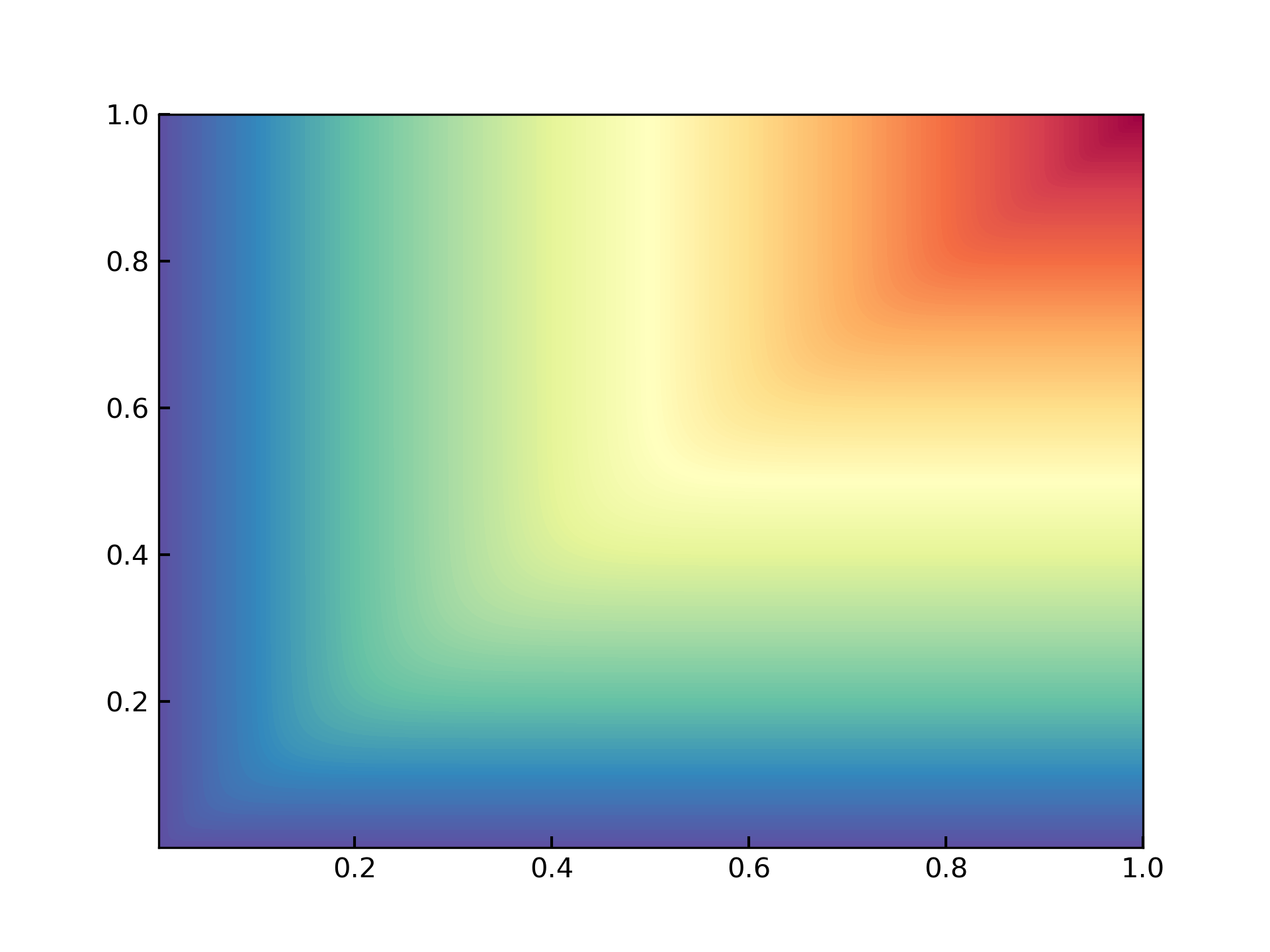} & 
			\includegraphics[width=.32\linewidth]{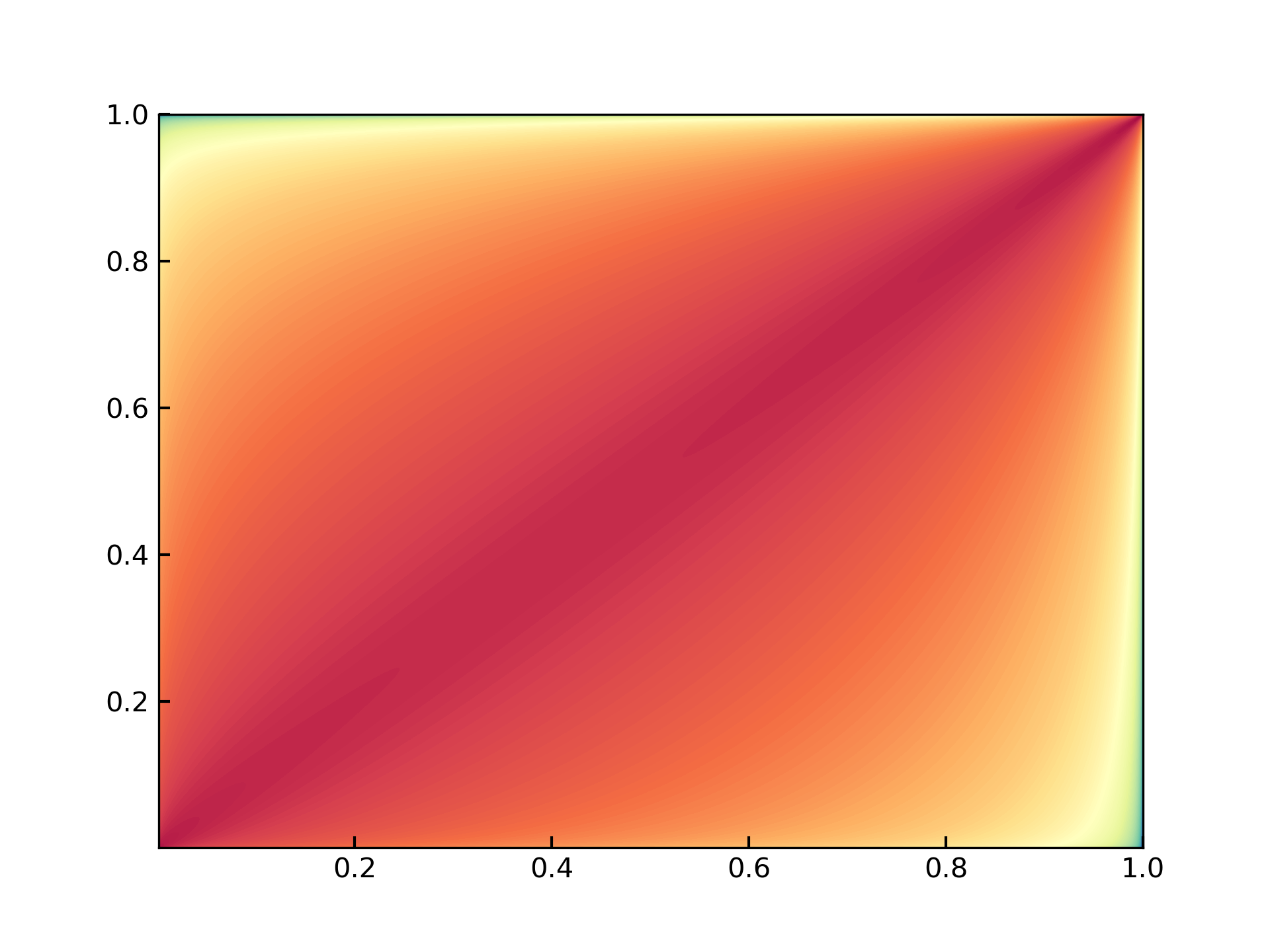}\\
		\end{tabular}
	\end{subfigure}
        \hfill
	\begin{subfigure}{.45\textwidth}
		\centering
		\begin{tabular}{ccc}
			\includegraphics[width=.32\linewidth]{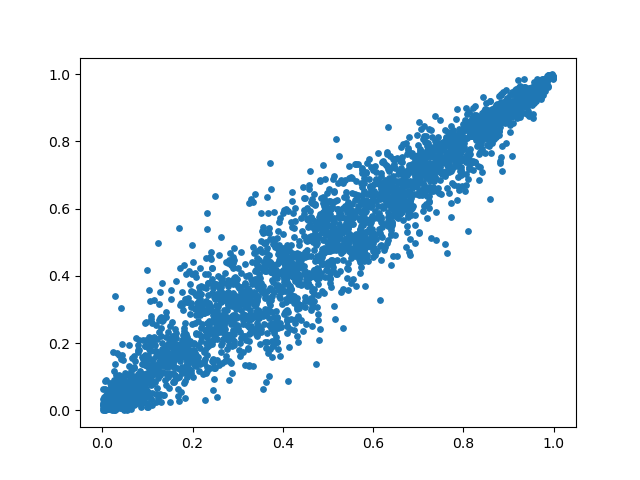} & 
			\includegraphics[width=.32\linewidth]{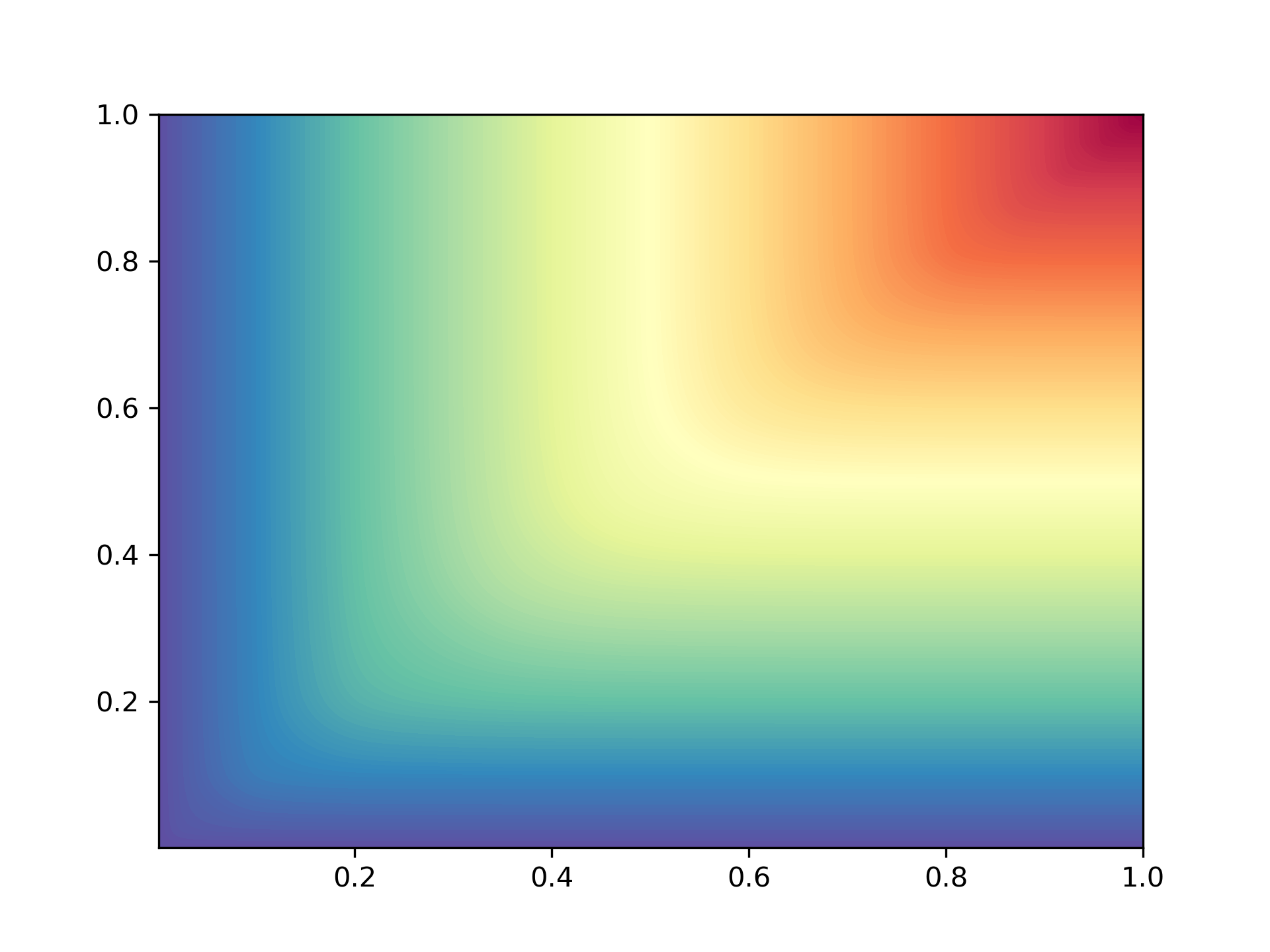} & 
			\includegraphics[width=.32\linewidth]{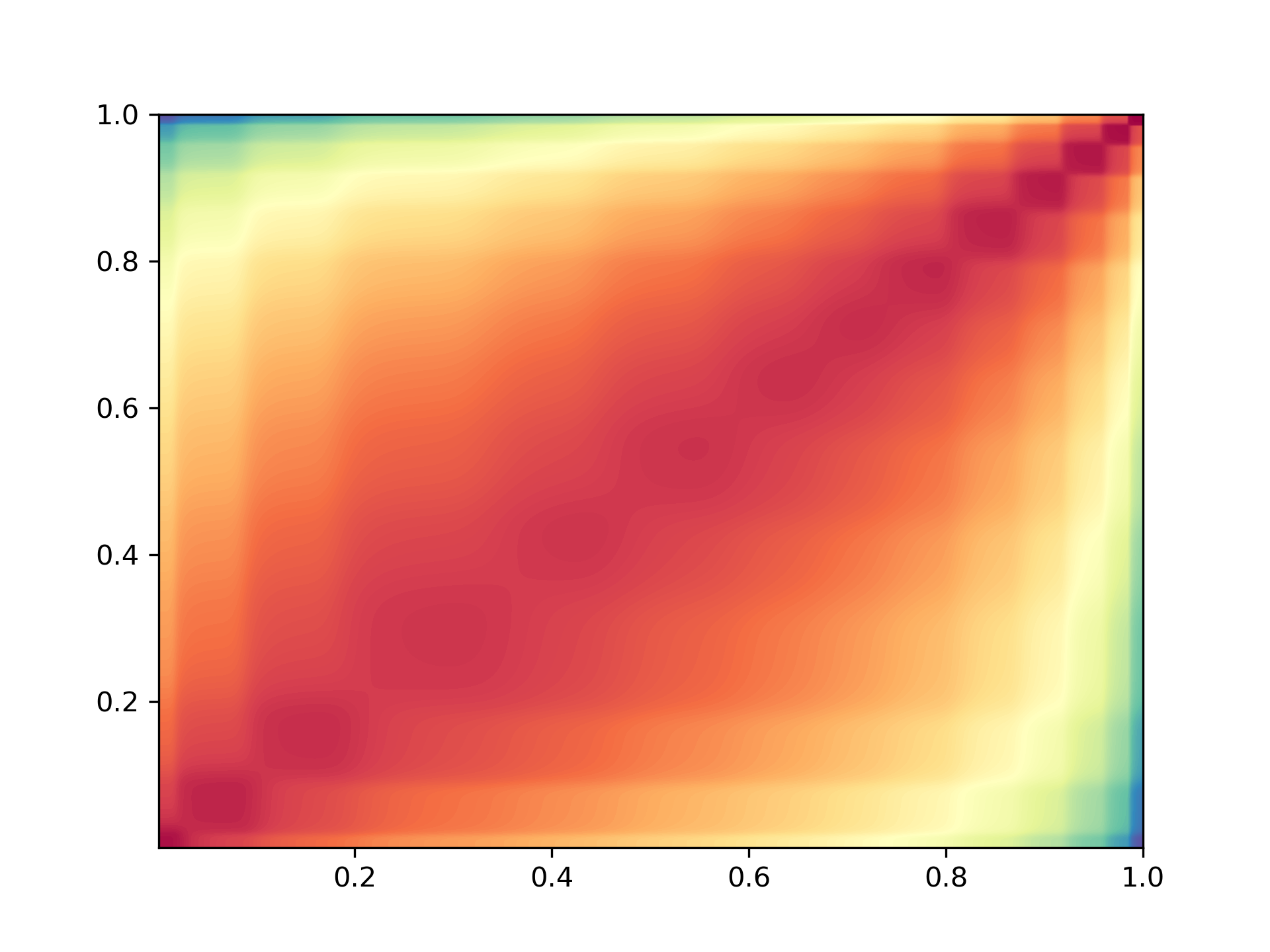} \\
		\end{tabular}
	\end{subfigure}
	\begin{subfigure}{.45\textwidth}
		\centering
		\begin{tabular}{ccc}
			\includegraphics[width=.32\linewidth]{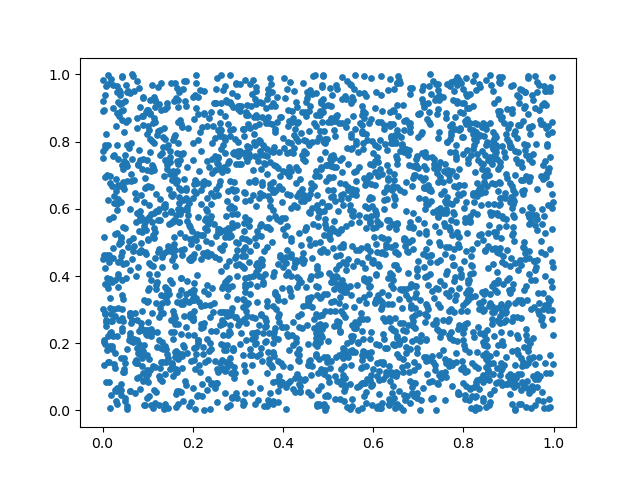} & 
			\includegraphics[width=.32\linewidth]{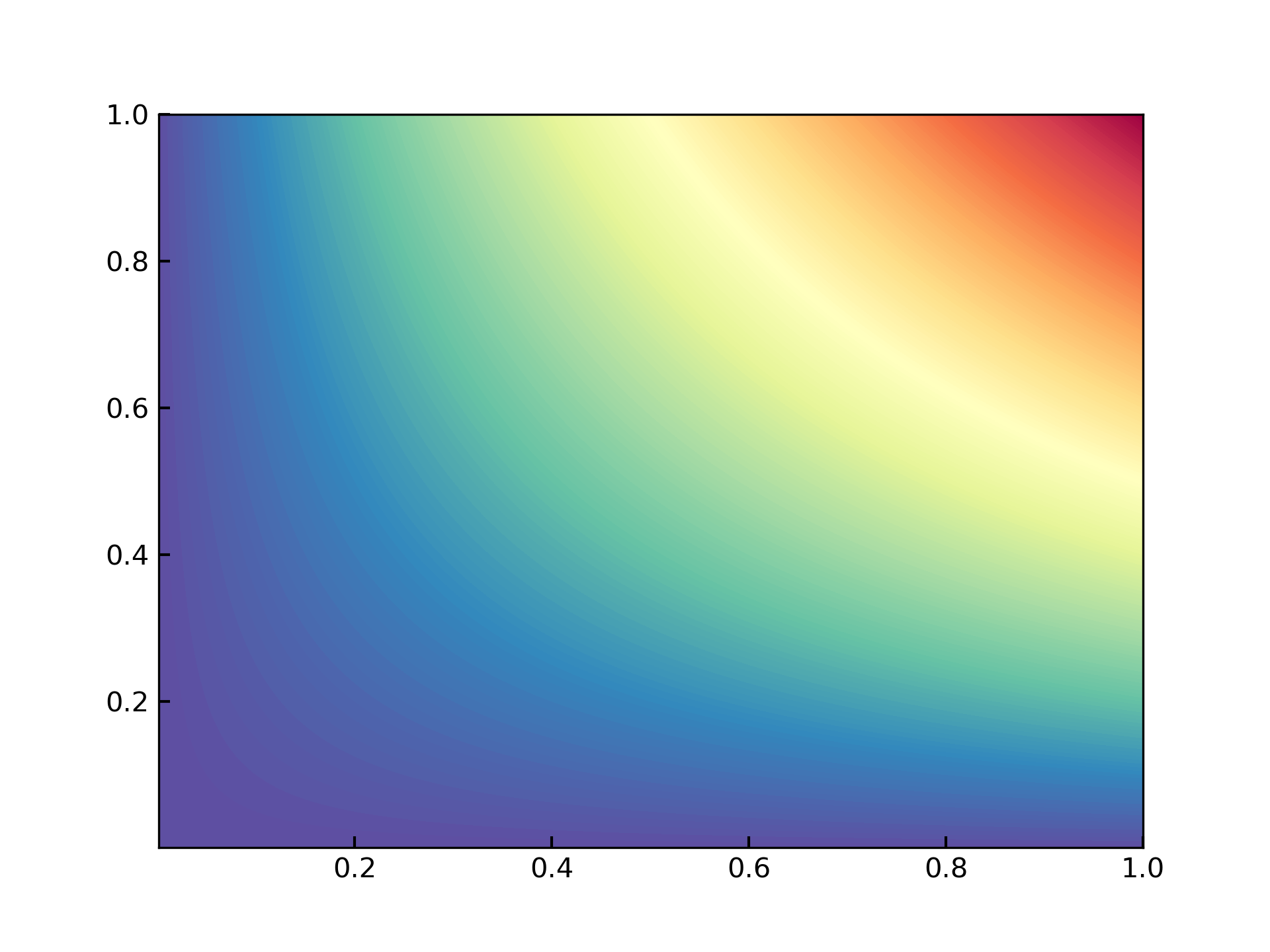} & 
			\includegraphics[width=.32\linewidth]{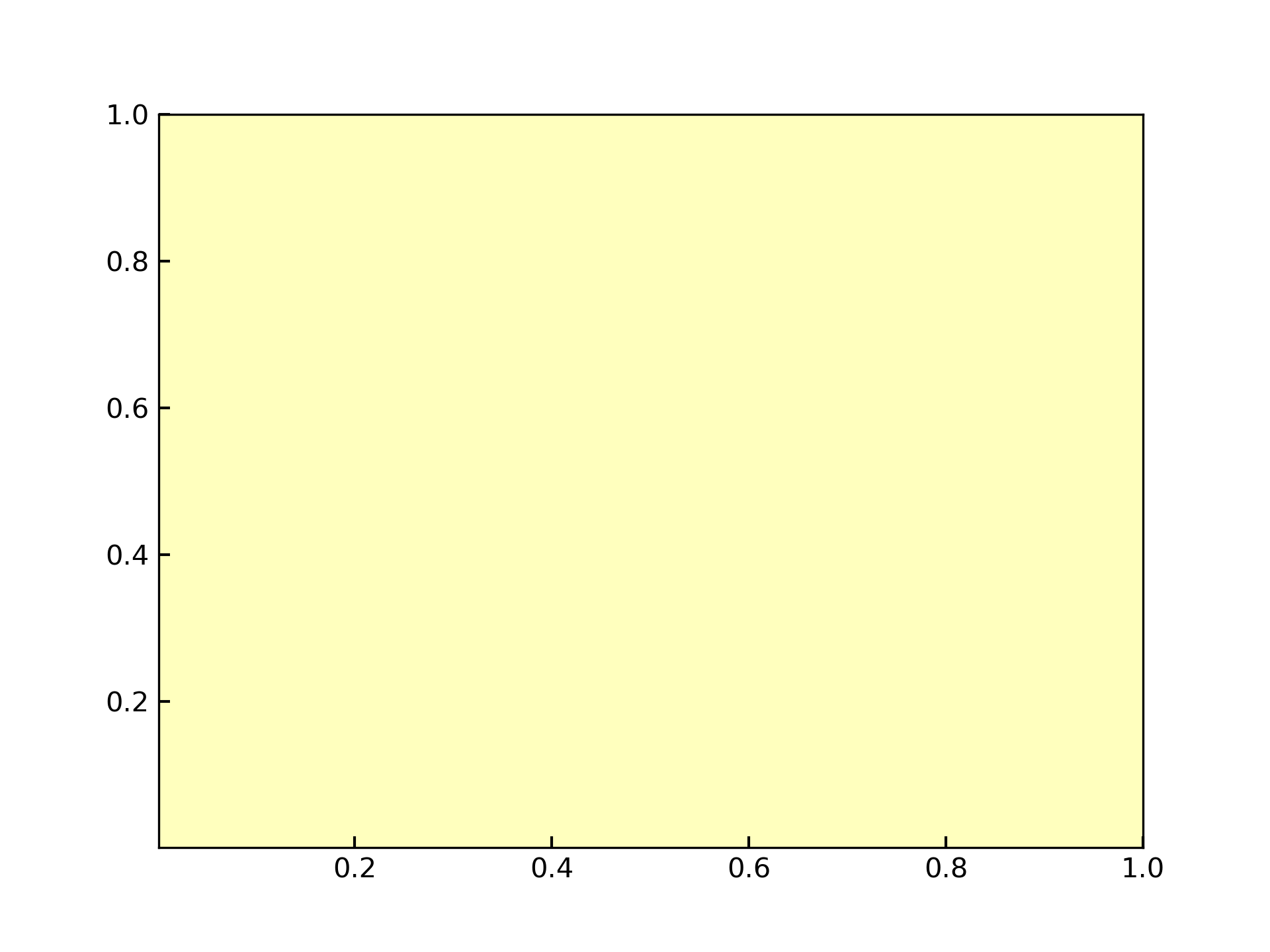}\\
		\end{tabular}
		\caption{Ground Truth}
	\end{subfigure}
	\hfill
	\begin{subfigure}{.45\textwidth}
		\centering
		\begin{tabular}{ccc}
			\includegraphics[width=.32\linewidth]{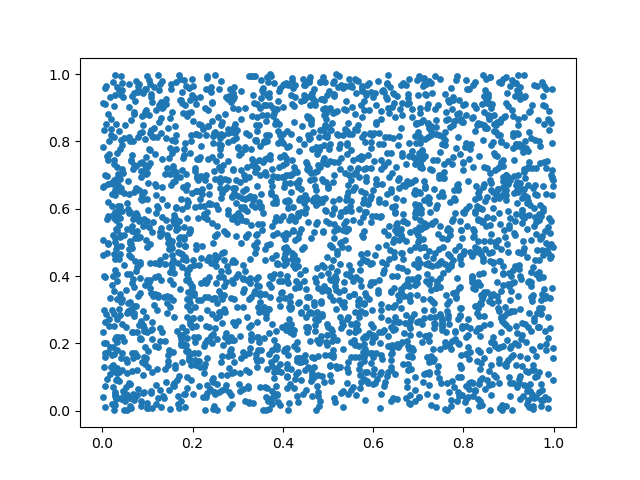} & 
			\includegraphics[width=.32\linewidth]{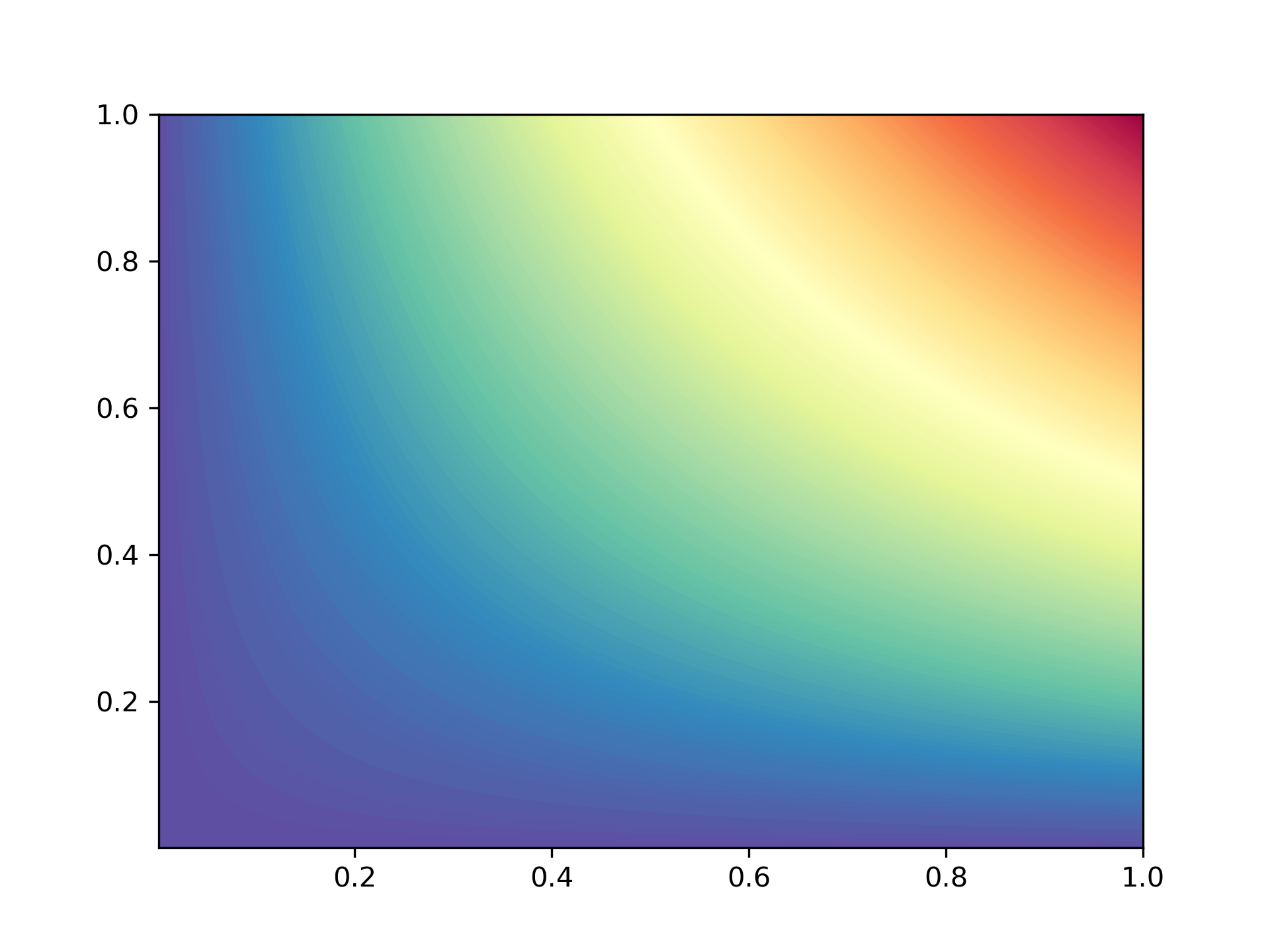} & 
			\includegraphics[width=.32\linewidth]{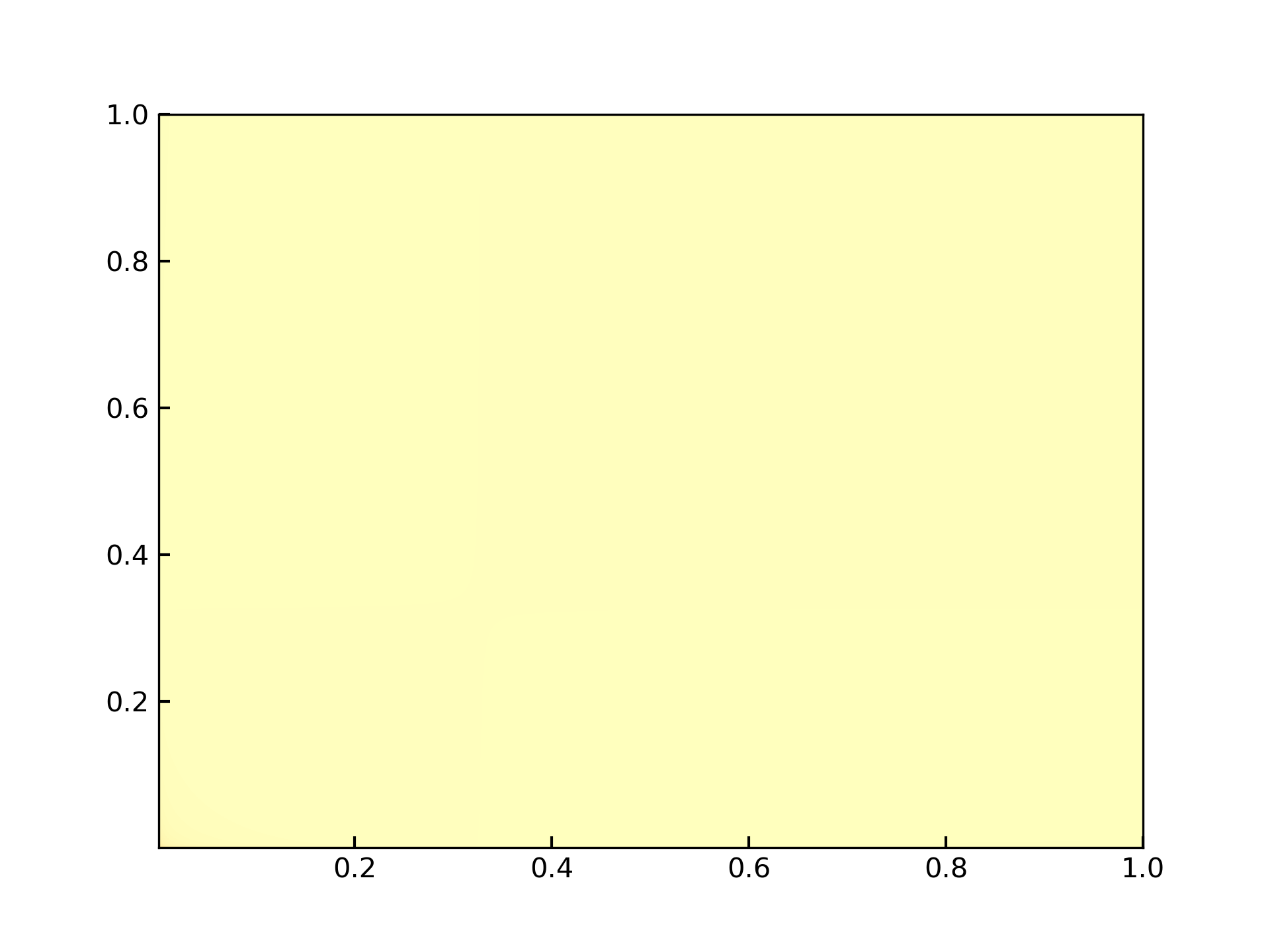} \\
		\end{tabular}
  		\caption{Learned Copula}
	\end{subfigure}	
	\caption{Top to bottom: Learning Clayton, Frank, Gumbel, and Independence copulas using {\ours} from right-censored observations with non-linear survival and censoring risks. Plots from left to right: (i) samples drawn from the ground truth and learned distributions. (ii) joint cumulative distributions, (iii) log probability densities.}
	\label{fig:nonlinear-copula}
\end{figure*}

\end{document}